\renewcommand{\ref}{\hyperref}
\newtheorem*{rep@theorem}{\rep@title}
\newcommand{\newreptheorem}[2]{%
\newenvironment{rep#1}[1]{%
 \def\rep@title{#2 \ref{##1}}%
 \begin{rep@theorem}}%
 {\end{rep@theorem}}}
\newtheorem{theorem}{Theorem}
\theoremstyle{definition}
\newcounter{saveenumi}
\algrenewcommand\algorithmicrequire{\textbf{Parameters:}}
\algrenewcommand\algorithmicensure{\textbf{Initialization:}}
\newcommand{\calA}{\mathcal{A}}
\newcommand{\calD}{\mathcal{D}}
\newcommand{\calG}{\mathcal{G}}
\newcommand{\calH}{\mathcal{H}}
\newcommand{\calS}{\mathcal{S}}
\newcommand{\calX}{\mathcal{X}}
\newcommand{\calY}{\mathcal{Y}}
\newcommand{\vecw}{\mathbf{w}}
\newcommand{\vecx}{\mathbf{x}}
\newcommand{\vecy}{\mathbf{y}}
\DeclareMathOperator{\sign}{sign}
\newcommand{\removed}[1]{}
\newcommand{\Ep}[2]{{\mathbb{E}_{#1}\left[{#2}\right]}}
\newcommand{\norm}[1]{\left\lVert{#1}\right\rVert}
\newcommand{\abs}[1]{{\left\lvert{#1}\right\rvert}}
\newcommand{\R}{\mathbb{R}}
\definecolor{myblue}{rgb}{0,0.2,0.8}
\newcommand{\dconv}{\textsc{d-conv }}
\newcommand{\dlocal}{\textsc{d-local }}
\newcommand{\dfc}{\textsc{d-fc }}
\newcommand{\sconv}{\textsc{s-conv }}
\newcommand{\slocal}{\textsc{s-local }}
\newcommand{\sfc}{\textsc{s-fc }}
\newcommand{\blasso}{$\beta$\textsc{-lasso }}
\newcommand{\dconvn}{\textsc{d-conv}}
\newcommand{\dlocaln}{\textsc{d-local}}
\newcommand{\dfcn}{\textsc{d-fc}}
\newcommand{\sconvn}{\textsc{s-conv}}
\newcommand{\slocaln}{\textsc{s-local}}
\newcommand{\sfcn}{\textsc{s-fc}}
\newcommand{\blasson}{$\beta$\textsc{-lasso}}
\title{\bf{Towards Learning Convolutions from Scratch}}
\author{Behnam Neyshabur}
\affil{Google\\\textsf{neyshabur@google.com}}
\date{}
\begin{document}
\maketitle
\begin{abstract}
Convolution is one of the most essential components of architectures used in computer vision. As machine learning moves towards reducing the expert bias and learning it from data, a natural next step seems to be learning convolution-like structures from scratch.  This, however, has proven elusive.  For example, current state-of-the-art architecture search algorithms use convolution as one of the existing modules rather than learning it from data. In an attempt to understand the inductive bias that gives rise to convolutions, we investigate minimum description length as a guiding principle and show that in some settings, it can indeed be indicative of the performance of architectures. To find architectures with small description length, we propose \blasson, a simple variant of \textsc{lasso} algorithm that, when applied on fully-connected networks for image classification tasks, learns architectures with local connections and achieves state-of-the-art accuracies for training fully-connected nets on CIFAR-10 (85.19\%), CIFAR-100 (59.56\%) and SVHN (94.07\%) bridging the gap between fully-connected and convolutional nets.
\end{abstract}
\section{Introduction}\label{sec:intro}
Since its inception, machine learning has moved from inserting expert knowledge as explicit inductive bias toward general-purpose methods that learn these biases from data, a trend significantly accelerated by the advent of deep learning~\citep{krizhevsky2012imagenet}.  This trend is pronounced in areas such as computer vision~\citep{he2016identity}, speech recognition~\citep{chan2016listen}, natural language processing~\citep{senior2020improved} and computational biology~\citep{senior2020improved}. In computer vision, for example, tools such as deformable part models~\citep{felzenszwalb2009object}, SIFT features~\citep{lowe1999object} and Gabor filters~\citep{mehrotra1992gabor} have all been replaced with deep convolutional architectures. Crucially, in certain cases, these general-purpose models learn similar biases to the ones present in the traditional, expert-designed tools. 

Gabor filters in convolutional nets are a prime example of this phenomenon: convolutional networks learn Gabor-like filters in their first layer~\citep{zeiler2014visualizing}. However, simply replacing the first convolutional layer by a Gabor filter worsens performance, showing that the learned parameters differ from the Gabor filter in a helpful way.  A natural analogy can then be drawn between Gabor filters and the use of convolution itself:  convolution is a `hand-designed' bias which expresses local connectivity and weight sharing. Is it possible to \emph{learn} these convolutional biases from scratch, the same way convolutional networks learn to express Gabor filters?

Reducing inductive bias requires more data, more computation, and larger models---consider replacing a convolutional network with a fully-connected one with the same expressive capacity.   Therefore it is important to reduce the bias in a way that does not damage the efficiency significantly, keeping only the core bias which enables high performance.  Is there a core inductive bias that gives rise to local connectivity and weight sharing when applied to images, enabling the success of convolutions?  The answer to this question would enable the design of algorithms which can learn, e.g., to apply convolutions to images, and apply more appropriate biases for other types of data. 

Current research in architecture search is an example of efforts to reduce inductive bias but often without explicitly substituting the inductive bias with a simpler one~\citep{zoph2016neural,real2019regularized,tan2019efficientnet}. However, searching without a guidance makes the search computationally expensive. Consequently, the current techniques in architecture search are not able to find convolutional networks from scratch and only take convolution layer as a building block and focus on learning the interaction of the blocks. 
\paragraph{Related Work} Previous work attempted to understand or improve the gap between convolutional and fully-connected networks. Perhaps the most related work is \citet{urban2016deep},  titled \emph{``Do deep convolutional nets really need to be deep and convolutional?''} (the abstract begins ``Yes, they do.''). \citet{urban2016deep} demonstrate empirically that even when trained with distillation techniques, fully-connected networks achieve subpar performance on CIFAR10, with a best-reported accuracy of $74.3\%$.  Our work, however, suggests a different view, with results that significantly bridge this gap between fully-connected and convolutional networks. In another attempt, \citet{mocanu2018scalable} proposed sparse evolutionary training, achieving $74.84\%$ accuracy on CIFAR-10. \citet{fernando2016convolution} also proposes an evolution-based approach but they only evaluate their approach on MNIST dataset. To the best of our knowledge, the highest accuracy achieved by fully-connected networks on CIFAR-10 is $78.62\%$ \citep{lin2015far}, achieved through heavy data augmentation and pre-training with a zero-bias auto-encoder.

In order to understand the inductive bias of convolutional networks, \citet{d2019finding} embedded convolutional networks in fully-connected architectures, finding that combinations of parameters expressing convolutions comprise regions which are difficult to arrive at using SGD.  Other works have studied simplified versions of convolutions from a theoretical prospective~\citep{gunasekar2018implicit,cohen2016inductive,novak2018bayesian}. Relatedly, motivated by compression, many recent works~\citep{frankle2018lottery,lee2018snip, evci2019rigging,dettmers2019sparse} study sparse neural networks; studying their effectiveness on learning architectural bias from data would be an interesting direction. 

Other recent interesting related work show variants of transformers are capable of succeeding in vision tasks and learning locality connected patterns~\citep{cordonnier2019relationship,chengenerative}. In order to do so, one needs to provide the pixel location as input which enables the attention mechanism to learn locality. Furthermore, it is not clear that in order to learn convolutions such complex architecture is required. Finally, in a parallel work \citet{zhou2020meta} proposed a method for meta-learning symmetries from data and showed that it possible to use such method to meta-learn convolutional architectures from synthetic data.

\paragraph{Contributions:} Our contributions in this paper are as follows:
\begin{itemize}
    \item We introduce shallow (\sconvn) and deep (\dconvn) all-convolutional networks~\citep{springenberg2014striving} with desirable properties for studying convolutions. Through systematic experiments on \sconv and \dconv and their locally connected and fully-connected counterparts, we make several observations about the role of depth, local connectivity and weight sharing (Section~\ref{sec:disentangle}):
    \begin{itemize}
        \item Local connectivity appears to have the greatest influence on performance.
        \item The main benefit of depth appears to be efficiency in terms of memory and computation. Consequently, training shallow architectures with many more parameters for a long time would compensate most of the lost performance due to lack of depth.
        \item The benefit of depth diminishes even further without weight-sharing.
    \end{itemize}
    \item We look at Minimum Description Length (MDL) as a guiding principle to what architectures generalize better (Section~\ref{sec:mdl}):
    \begin{itemize}
        \item Showing that MDL can be bounded by number of parameters, we argue and demonstrate empirically that architecture families that need fewer parameters to fit a training set a certain degree tend to generalize better in the over-parameterized regime.
        \item We prove an MDL-based generalization bound for architectures search which suggests that the sparsity of the found architecture has great affect on generalization. However, weight sharing is only effective if it has a simple structure.
    \end{itemize}
    \item Inspired by MDL, we propose a training algorithm \blasso, a variant of \textsc{lasso} with a more aggressive soft-thresholding to find architectures with few parameters and hence, a small description length. We present the following empirical findings for \blasso (Section~\ref{sec:blasso}):
    \begin{itemize}
        \item \blasso achieves state-of-the-art results on training fully connected networks on CIFAR10, CIFAR-100 and SVHN tasks. The results are on par with the reported performance of multi-layer convolutional networks around year 2013 \citep{hinton2012improving,zeiler2013stochastic}. Moreover, unlike convolutional networks, these results are \emph{invariant to permuting pixels}.
        \item We show that the learned networks have fewer parameters than their locally connected counterparts. By visualizing the filters, we observe that \blasso has indeed learned local connectivity but it has also learned to sample more sparsely in a local neighborhood to increase the receptive field while keeping the number of parameters low.
        \item Somewhat related to the main goal of the paper, we trained ResNet18 with different kernel sizes using \blasso and we observed that for all kernel sizes, \blasso improves over SGD on CIFAR10, CIFAR-100 and SVHN datasets.
    \end{itemize}
\end{itemize}

\begin{figure}
	\centering
	\subfloat{{\includegraphics[width=0.47\linewidth]{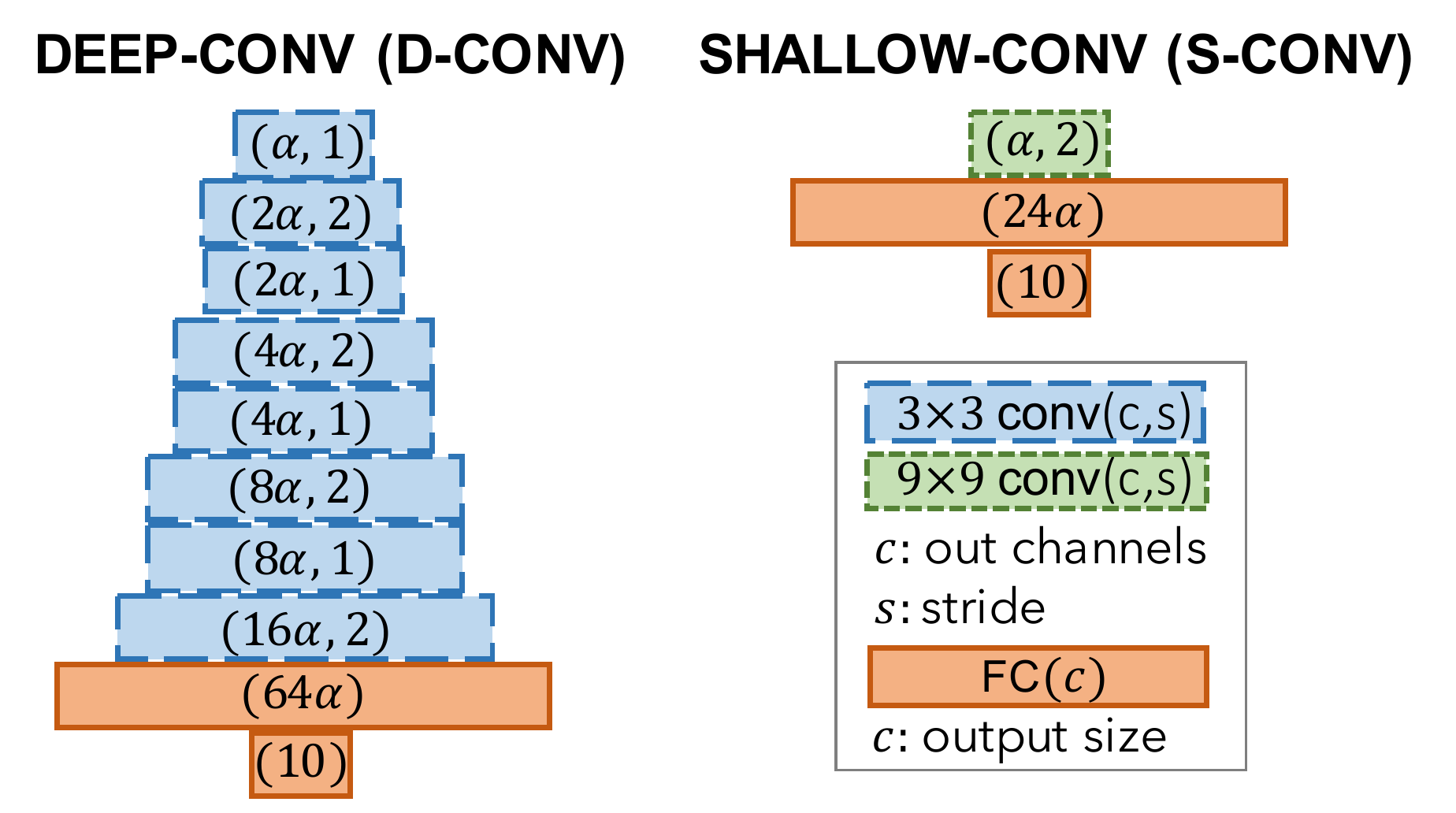}}}
	\subfloat{{\includegraphics[width=0.26\linewidth]{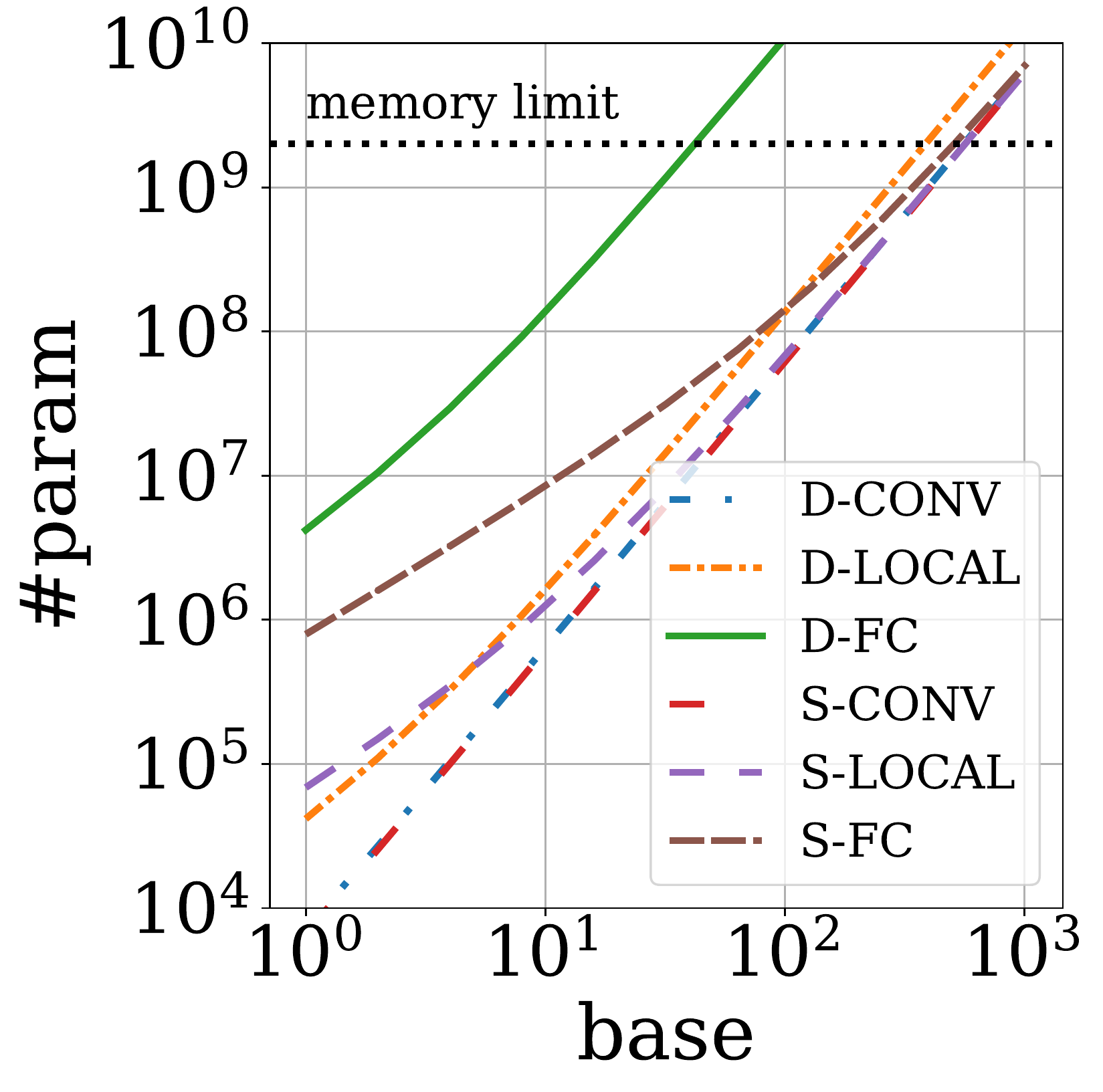}}}
	\subfloat{{\includegraphics[width=0.26\linewidth]{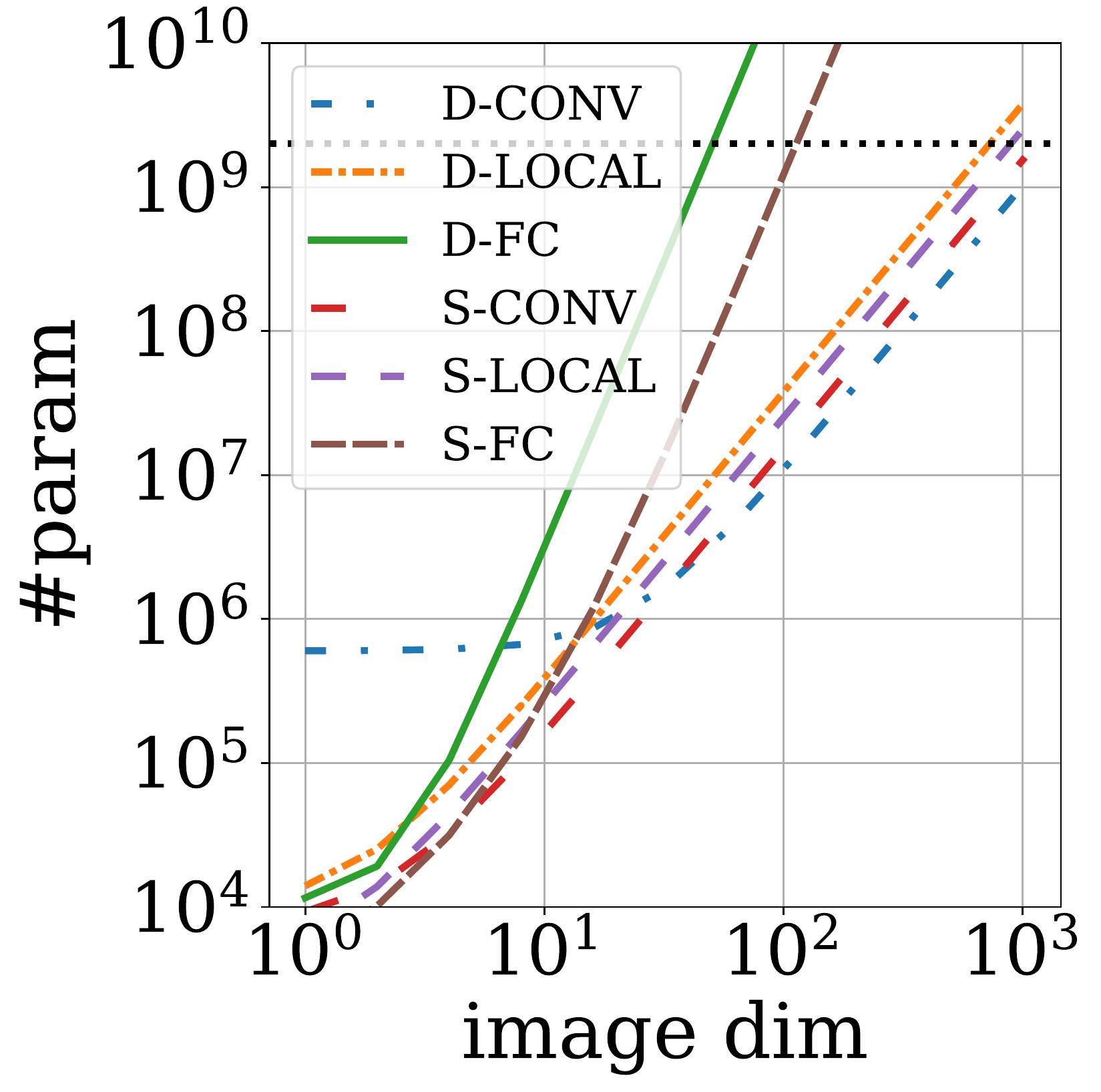}}} \\
	\caption{ \small \textbf{\dconv and \sconv architectures and their scaling}: Left panel shows the architectures. Each convolution or fully-connected layer (except the last one) has batch-normalization followed by ReLU activation. The right panel indicates how the number of parameters in each architecture and their corresponding locally and fully-connected scales with respect to the number of base channels (shown by $\alpha$ in the left panel) and image dimension. The dotted black line is the maximum model size for training using 16-bits on a V100 GPU.}
	\label{fig:models}
\end{figure}

\section{Disentangling Depth, Weight sharing and Local connectivity}\label{sec:disentangle}
To study the inductive bias of convolutions, we take a similar approach to \citet{d2019finding}, examining two large hypothesis classes that encompass convolutional networks: locally-connected and fully-connected networks. Given a convolutional network, its corresponding locally-connected version features the same connectivity, but eliminates weight sharing.  The corresponding fully-connected network then adds connections between all nodes in adjacent layers. Functions expressible by the fully-connected networks encompass those of locally-connected networks, which encompass convolutional networks.  

One challenge in studying the inductive bias of convolutions is that the existence of other components such as pooling and residual connections makes it difficult to isolate the effect of convolutions in modern architectures. One solution is to simply remove those from the current architectures. However, that would result is a considerable performance loss since the other design choices of the architecture family were optimized with those components. Alternatively, one can construct an all-convolutional network with a desirable performance. \citet{springenberg2014striving} have proposed a few all-convolutional architectures with favorable performance. Unfortunately, these models cannot be used for studying the convolution since fully-connected networks that can represent such architectures are too large. Another way to resolve this is by scaling down the number of parameters in the conventional architecture which unfortunately degrades the performance significantly. To this end, below we propose two all-convolutional networks to overcome the discussed issues.

\subsection{Introducing \dconv and \sconv for Studying Convolutions}\label{subsec:intro}
In this work, we propose \dconv and \sconvn, two all-convolutional networks that perform relatively well on image classification tasks while also enjoying  desirable scaling with respect to the number of channels in the corresponding convolutional network and input image size. As shown in the left panel of Figure~\ref{fig:models}, \dconv has 8 convolutional layers followed by two fully-connected layers. However, \sconv has only one convolutional layer which is followed by two fully-connected layers. The right panel in Figure~\ref{fig:models} shows how the number of parameters of these models and their corresponding locally-connected (\dlocaln, \slocaln) and fully-connected (\dfcn, \sfcn) networks scale with respect to the number of base channels (channels in the first convolutional layer) and the input image size. \dfc has the highest number of parameters given the same number of base channels, which means the largest \dconv that can be studied will not have many based channels. On the other hand \sfc has a better scaling which allows us to have more base channels in the corresponding \sconvn. The other interesting observation is that the number of parameters in fully-connected networks depends on the fourth power of the input image dimension (e.g. 32 for CIFAR-10). However, the local and convolutional networks have a quadratic dependency on the image dimension. Note that the quadratic dependency of \dconv and \sconv to the image size is due to of lack of global pooling.

\begin{figure}%
	\centering
	\subfloat{{\includegraphics[width=0.33\linewidth]{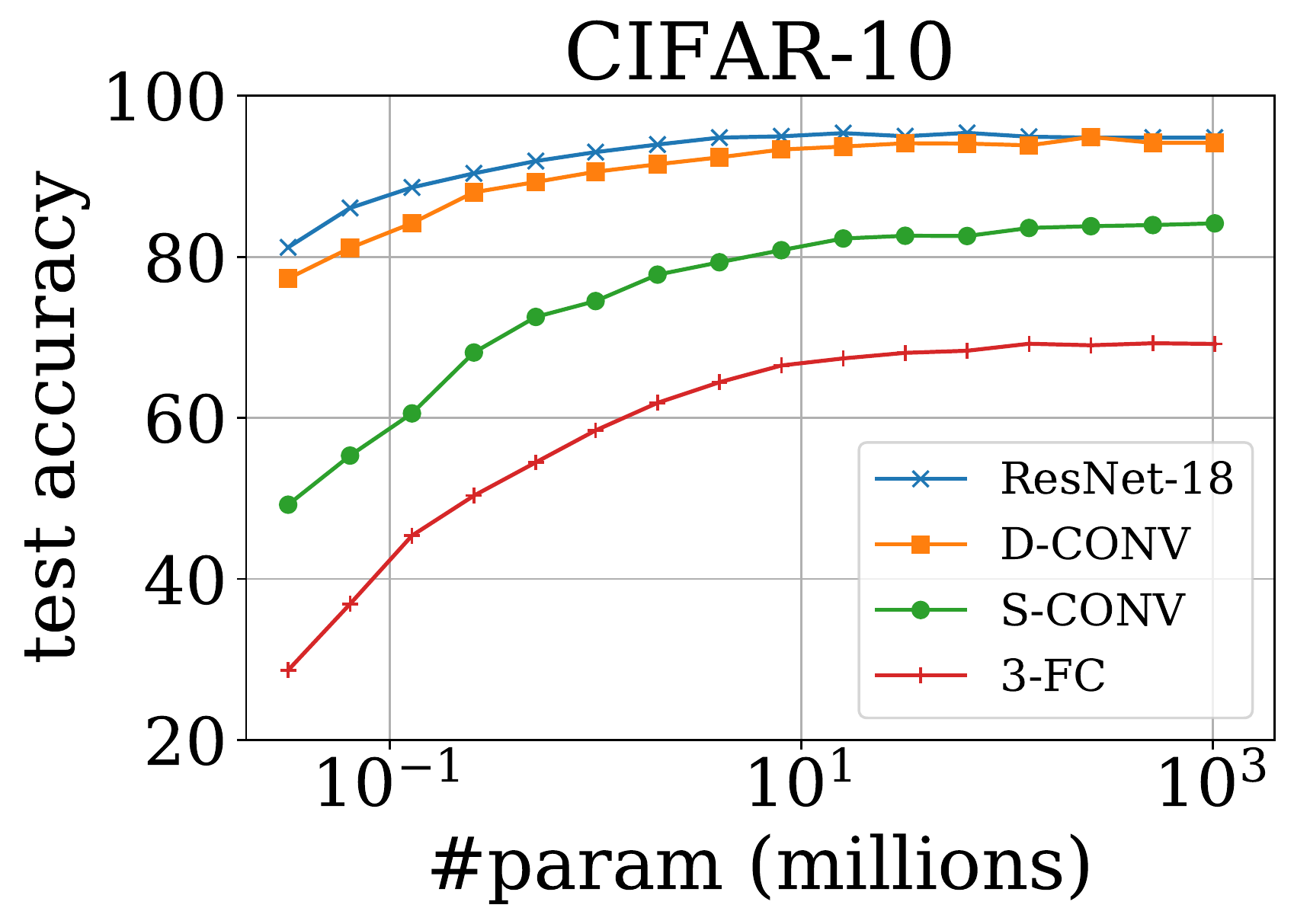}}}
	\subfloat{{\includegraphics[width=0.33\linewidth]{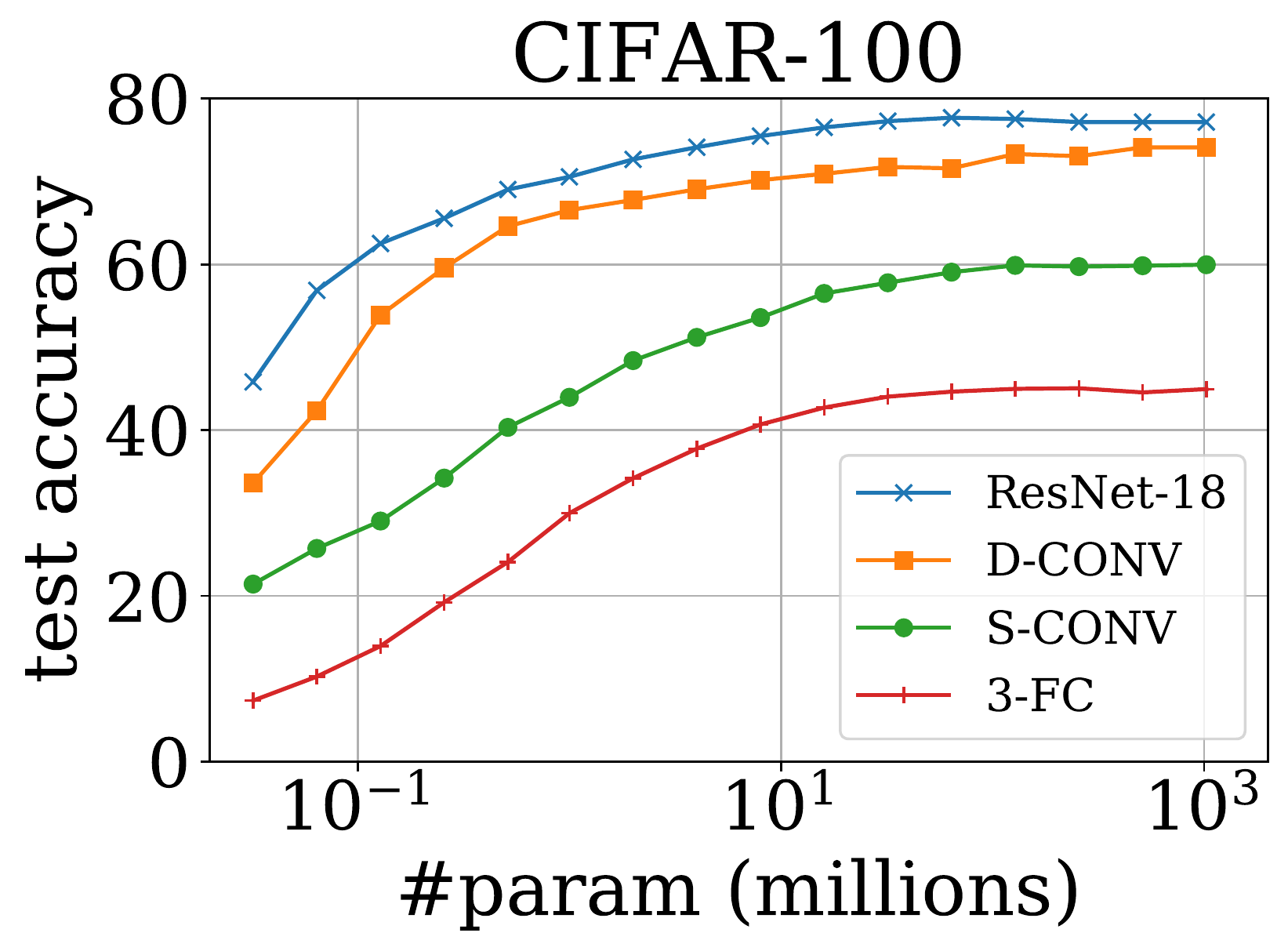}}}
	\subfloat{{\includegraphics[width=0.33\linewidth]{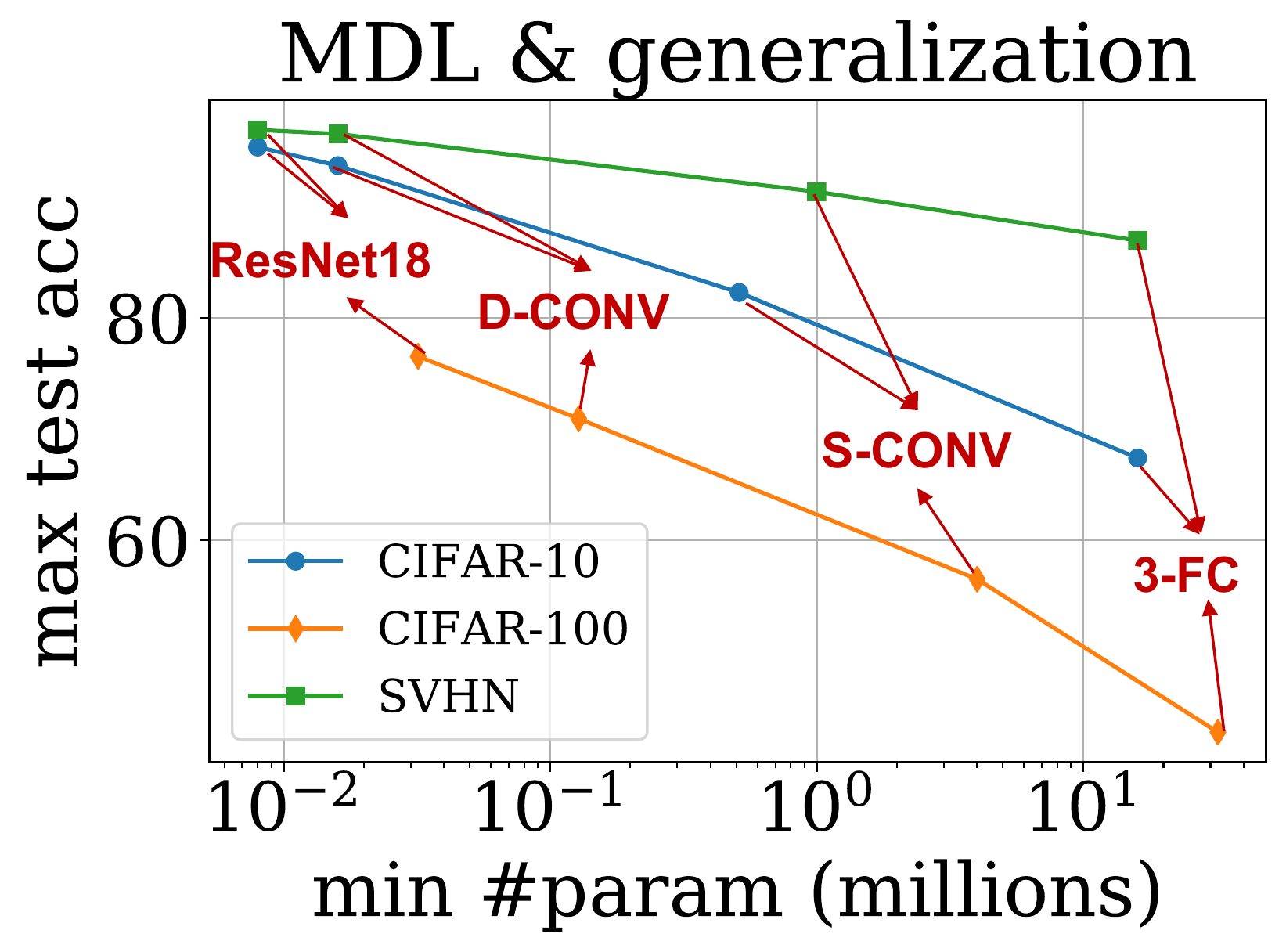}}} \\
	\caption{\small \textbf{Performance scaling of different architectures.} The left and middle panels show the test accuracy of four architectures trained on CIFAR-10 and CIFAR-100 datasets. See the Appendix for SVHN plot and experiment details. The right panel shows the maximum test accuracy of architectures in the over-parameterized regime against the minimum number of parameters they need in order to achieve a certain training accuracy that is fixed for each dataset.}
	\label{fig:baseline}
\end{figure}

Figure~\ref{fig:baseline} compares the performance of \dconv and \sconv against ResNet18~\citep{he2016deep}, and a 3-layer (2 hidden layers) fully-connected network with equal number of hidden units in each hidden layers denoted by \textsc{3-fc}. It is clear that on all tasks, the performance of \dconv is comparable but slightly worse than ResNet18 which is expected. Moreover, \sconv has a performance which is significantly better than fully-connected networks but worse than \dconv.

\begin{table}[h]

    \centering
    \begin{tabular}{lcccccccc}
        \toprule
         \multirow{2}{*}[-1pt]{Models}&\multicolumn{2}{c}{\#param (M)} & \multicolumn{2}{c}{CIFAR10} & \multicolumn{2}{c}{CIFAR100}  & \multicolumn{2}{c}{SVHN}\\
        \cmidrule(lr){2-3} \cmidrule(lr){4-5}\cmidrule(lr){6-7}\cmidrule(lr){8-9}
        &orig.& \textsc{fc}-emb.& 400 ep.& 4000 ep.& 400 ep.& 4000 ep.& 400 ep.& 4000 ep.\\
        \midrule
        \dconv &1.45&256&88.84 &89.79& 63.73&62.26&95.65&95.86\\
        \dlocal &3.42&256& 86.13&86.07& 58.58&55.71&95.71&95.85\\
        \dfc &256&256& 64.78 &63.62& 36.61&35.51&92.52&90.97\\
        \midrule
        \sconv &138&256&84.14 &87.05& 59.48&62.51&92.34&93.38\\
        \slocal&147&256& 81.52&85.86& 56.64&62.03&92.51&93.98\\
        \sfc &256&256&72.77 &78.63& 47.72&51.43&88.64&91.80\\
        \midrule
        \textsc{3-fc} &256&256& 69.19& 75.12&44.95 & 50.75&85.98 &86.02\\
        \bottomrule\\
    \end{tabular}
     \caption{\small Performance of \dconvn, \sconv and their locally and fully-connected counterparts. The results indicate that in this regime, the performance of \slocal is comparable to that of \dlocal and \sconv. Please refer to Appendix for details of the training procedure.}
 \label{table:disentangle}
\end{table}
\subsection{Empirical Investigation}\label{subsec:investigation}
In this section, we disentangle the effect of depth, weight sharing and local connectivity.
Table~\ref{table:disentangle} reports the test accuracy of \dconvn, \sconvn, their counterparts and \textsc{3-fc} on three datasets. For each architecture, the base channels is chosen such that the corresponding fully-connected network has roughly 256 million parameters. First note that with that constraint, deep convolutional and locally connected architectures have much smaller number of parameters compare to others. Moreover, for both deep and shallow architectures, there is not a considerable difference in the number of parameters of the convolutional and locally connected networks. Please refer to Figure~\ref{fig:models} for the scaling of each architecture. For each experiment, the results for training 400 and 4000 epochs are reported. Several observations can be made from Table~\ref{table:disentangle}:
\begin{enumerate}
    \item {\bf Importance of locality}: For both deep and shallow architectures and across all three datasets, the gap between a locally connected network and its corresponding fully-connected version is much higher than that between convolutional and locally-connected networks. That suggests that the main benefit of convolutions comes from local connectivity.
    \item {\bf Shallow architectures eventually catch up to deep ones (mostly)}: While training longer for deep architectures does not seem to improve the performance, it significantly does so in the case of shallow architectures across all datasets. As a result, the gap between deep and shallow architectures shrinks significantly after training for 4000 epochs.
    \item {\bf Without weight sharing, the benefit of depth disappears}: \sfc outperforms \dfc in all experiments. Moreover, when training for 4000 epochs, none of \dlocal and \slocal have clear advantage over each other.
    \item {\bf Structure of fully-connected network matters}: \sfc outperforms \textsc{3-fc} and \dfc in all experiments by a considerable margin. Even more interesting is that \sfc and \textsc{3-fc} have the same number of parameters and depth but \sfc has much more hidden units in the first layer compare to \textsc{3-fc}. Please see the appendix for the exact details.
\end{enumerate}

The results in Table~\ref{table:disentangle} suggests that \sconv performs comparably to \dconv and the gap between \sconv and \slocal is negligible compare to the gap between \slocal and \sfc. Therefore, in the rest of the paper, we try to bridge the gap between the performance of \sfc and \slocal.
\section{Minimum Description Length as a Guiding Principle}\label{sec:mdl}
In this section, we take a short break from experiments and look at the minimum description length as a way to explain the differences in the performance architectures and a guiding principle for finding models that generalize well. Consider the supervised learning setup where given an input data $\vecx \in \calX$, we want to predict a label $\vecy\in \calY$ under the common assumption that the pair $(\vecx,\vecy)$ are generated from a distribution $\calD$. Let $\calS=\{(\vecx_1,\vecy_n)\dots, (\vecx_m,\vecy_m)\}$ be a training set sampled i.i.d from $\calD$ and $\ell:\calY\times \calY\rightarrow R$ be a loss function. The task is then to learn a function/hypothesis $h:\calX\rightarrow \calY$ with low population loss $L_{\calD}(h)=\Ep{(\vecx,\vecy)\sim \calD}{\ell(h(\vecx),\vecy)}$ also known as generalization error. In order to find the hypothesis $h$, we start by picking a hypothesis class $\calH$ (eg. linear classifiers, neural networks of a certain family, etc.) and a learning algorithm $\calA$. Finally, $\calA(\calS,\calH)$ returns a hypothesis $h$ which will be used for prediction. The learning algorithm usually finds a hypothesis among the ones that have low sample loss $L_\calS=\frac{1}{m}\sum_{(\vecx,\vecy)\in \calS}\ell(h(\vecx),\vecy)$.

A fundamental question in learning concerns the \emph{generalization gap} between the training error and the generalization error. One way to think about generalization is to think of a hypothesis as an explanation for association of label $\vecy$ to input data $\vecx$ (eg. given a picture and the label ``dog'', the hypothesis is an explanation of what makes this picture, a picture of a dog). The Occam’s razor principle provides an intuitive way of thinking about generalization of a hypothesis \citep{shalev2014understanding}:

\vspace{0.1in}

\centerline{\emph{A short explanation tends to be more valid than a long explanation.}}

\vspace{0.1in}
The above philosophical message can indeed be formalized as follows:
\begin{theorem}(\citealp[Theorem 7.7]{shalev2014understanding})\label{thm:mdl}
Let $\calH$ be a hypothesis class and $d:\calH\rightarrow\{0,1\}^*$ be a prefix-free description language for $\calH$. Then for any distribution $\calD$, sample size $m$,  and probability $\delta>0$, with probability $1-\delta$ over the choice of $\calS\sim\calD^m$ we have that for any $h\in \calH$
\begin{equation}
L_\calD(h) \leq L_\calS(h) + \sqrt{\frac{\abs{d(h)} + \log(2/\delta)}{2m}}
\end{equation}
where $\abs{d(h)}$ is the length of $d(h)$.
\end{theorem}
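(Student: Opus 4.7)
The plan is to combine Kraft's inequality for prefix-free codes with Hoeffding's inequality via a weighted union bound. First I would exploit the fact that $d$ is prefix-free, which by Kraft's inequality gives
\begin{equation*}
\sum_{h\in\calH} 2^{-\abs{d(h)}} \leq 1.
\end{equation*}
This is the key structural input: a description length function whose image forms a prefix-free set of binary strings yields weights that behave like a (sub-)probability distribution over $\calH$, which is exactly what is needed to take a union bound over a potentially infinite class.

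Next I would set, for each hypothesis, $w(h) = 2^{-\abs{d(h)}}$ and allocate a per-hypothesis confidence budget $\delta_h = w(h)\,\delta$, so that $\sum_h \delta_h \leq \delta$. For a \emph{fixed} $h$, assuming the loss is in $[0,1]$, Hoeffding's inequality yields
\begin{equation*}
\pr{\calS\sim\calD^m}{L_\calD(h)-L_\calS(h) \geq \epsilon_h} \leq 2e^{-2m\epsilon_h^2}.
\end{equation*}
Setting the right-hand side equal to $\delta_h = \delta\cdot 2^{-\abs{d(h)}}$ and solving gives
\begin{equation*}
\epsilon_h = \sqrt{\frac{\abs{d(h)}\ln 2 + \ln(2/\delta)}{2m}},
\end{equation*}
which (up to the convention that description length is measured in units making $\ln 2 \leq 1$, or by absorbing the constant) is exactly the form in the theorem.

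Finally I would apply the union bound across $\calH$: the probability that \emph{some} $h\in\calH$ violates its individual guarantee is at most $\sum_h \delta_h \leq \delta$. Therefore, with probability at least $1-\delta$ over the draw of $\calS$, the inequality $L_\calD(h)\leq L_\calS(h)+\epsilon_h$ holds for every $h\in\calH$ simultaneously, with $\epsilon_h$ depending on $h$ only through $\abs{d(h)}$.

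I expect the main obstacle, or at least the only conceptually nontrivial step, to be justifying why Kraft's inequality applies: it requires that $d$ is prefix-free so that the codewords are the leaves of a (possibly infinite) binary tree, which lets us assign each $h$ a disjoint subinterval of $[0,1]$ of length $2^{-\abs{d(h)}}$. A minor bookkeeping issue is reconciling the base of the logarithm in Hoeffding's bound with the base-2 description length; this can either be handled by redefining $d$ to be over an alphabet of size $e$ or by noting that the displayed bound is already in the (slightly loose) form obtained by dropping the $\ln 2$ factor, which is harmless since one can always re-scale $\delta$.
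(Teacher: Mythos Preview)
Your argument is correct and is exactly the standard proof of this result: Kraft's inequality turns the prefix-free code into a sub-probability weighting $w(h)=2^{-|d(h)|}$, Hoeffding handles each fixed $h$, and a weighted union bound with $\delta_h=w(h)\delta$ makes the guarantee uniform. Note, however, that the paper does not supply its own proof of this theorem at all --- it is quoted verbatim as Theorem~7.7 of \citet{shalev2014understanding} and used as a black box in the proof of Theorem~\ref{thm:sparsity}. The textbook proof you are reconstructing is the same Kraft-plus-Hoeffding argument, so there is nothing to compare: your proposal matches the (external) source the paper cites.
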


The above theorem connects the generalization gap of a hypothesis to its description length using a prefix-free description language, i.e. for any two distinct $h$ and $h'$, $d(h)$ is not a prefix of $d(h')$. Importantly, the description language should be chosen before observing the training set. The simplest form of a prefix free description language is the bit representation of the parameters. If a model has $n$ parameters each of which stored in $b$ bits, then the total number of bits to describe a model is $nb$. Furthermore, note that this is prefix-free because all hypotheses have the same description length. According to this language, the generalization gap is simply controlled by the number of parameters.

Empirical investigations on generalization of over-parameterized models suggest that the number of parameters is a very loose upper bound on the capacity~\citep{lawrence1998size, neyshabur2014search,zhang2016understanding,dziugaite2017computing,neyshabur2017exploring} (or in MDL language, it is not a compact encoding). However, it is possible that the number of parameters is a compact encoding in the under-parameterized regime but some other encoding becomes optimal in the over-parameterized regime. One empirical observation about many neural network architectures is that with a well-chosen approach to scale the number of parameters, one can expect that over-parametrization does not make a superior architecture inferior. While this is not always the case, it might be enough to distinguish architectures that have very different inductive biases. For example, the left two panels of Figure~\ref{fig:baseline} show the performance plots of different architectures do not cross each other across the scale. This was also the central observation used in architecture search by \citet{tan2019efficientnet}.

When the ordering of architectures with the same number of parameters based on their test performance is the same for any number of parameters (similar to the left two panels of Figure~\ref{fig:baseline}), the performance in under-parameterized regime can be indicative of the performance in over-parameterized regime and hence if an architecture can fit the training data with fewer number of parameters, that most likely translate to the superiority of the architecture in the over-parameterized regime. In the right panel of Figure~\ref{fig:baseline}, we can see that it is indeed the case for all 4 architectures and 3 datasets that we study in this work. ResNet18 requires least number of parameters to fit any dataset while the next ones are \dconvn, \sconv and \textsc{3-fc} respectively and this is the exact order in terms of generalization performance in over-parameterized regime where models have around 1 billion parameters.

\paragraph{MDL-based generalization bound for architecture search} Theorem~\ref{thm:mdl} gives a bound on the description length of a hypothesis and we discussed how it can be bounded by number of parameters. What if the learning algorithm searches over many architectures and finds one with small number of parameters? In this case, the active parameters can be denoted as the parameters with non-zero values and the parameter sharing can be modeled as parameters with the same value. Does the performance then only depend on the number of parameters of the final architecture? How does the search space come to the picture and how does weight sharing affect the performance? Let $b$ be the number of bits used to present each parameter (usually 16 or 32) and $n$ be the maximum number of allowed parameters in the architectures found by the architecture algorithm. Also, let $k$ be the number of parameters in the architecture found by the architecture search algorithm. The next theorem shows how the generalization gap can be bounded in this case.
\begin{restatable}{theorem}{mdlsparse}
\label{thm:sparsity}
Let $\R_b\subset \R$ be a $b$-bit presentation of real numbers ($\abs{\R_b}= 2^b$), $\calG=\{g:[n]\rightarrow [k]|k\leq  n\}$ be all possible mappings from numbers $1\dots n$ to $1\dots k$ for any $k\leq n$ and $d:\calG\rightarrow\left\{0,1\right\}^*$ a be prefix-free description language for $\calG$. For any distribution $\calD$, sample size $m$,  and $\delta>0$, with probability $1-\delta$ over the choice of $\calS\sim\calD^m$, for any parameterized function $f_\vecw$ where $\vecw\in \R^n_b$:
\begin{equation}
L_\calD(f_\vecw) \leq L_\calS(f_\vecw) + \sqrt{\frac{kb + \abs{d(g)} + \log(2n/\delta)}{2m}}
\end{equation}
where $\abs{d(g)}$ is the length of $d(g)$ and $w_i= p_{g(i)}$ for some $p\in \R^{k}_b$. Moreover, there exists a prefix free language $d$ such that for any $g\in \calG$, $\abs{d(g)}\leq \norm{\vecw}_0 \log(kn) + 2\log(n)$.
\end{restatable}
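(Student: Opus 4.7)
The plan is to invoke Theorem~\ref{thm:mdl} on the enlarged hypothesis class $\calH = \bigcup_{k=1}^{n} \calH_k$, where $\calH_k = \{f_\vecw : w_i = p_{g(i)},\ p \in \R_b^k,\ g : [n] \to [k]\}$, using a composite prefix-free description language $d^\ast$ built from three concatenated pieces: a fixed-length $\lceil \log n \rceil$-bit header encoding $k$, the prefix-free code $d(g)$ for the sharing pattern, and a $kb$-bit payload encoding $p \in \R_b^k$. Parsing is unambiguous because after $\lceil \log n \rceil$ bits $k$ is recovered, after the self-delimiting $d(g)$ the pattern $g$ is recovered, and the final $kb$ bits are then parseable since $k$ is already known; hence $d^\ast$ is prefix-free with $|d^\ast(f_\vecw)| \leq \lceil \log n \rceil + |d(g)| + kb$. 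Applying Theorem~\ref{thm:mdl} to $\calH$ with $d^\ast$ and absorbing the $\lceil \log n \rceil$ term into $\log(2/\delta)$ yields the stated bound $\sqrt{(kb + |d(g)| + \log(2n/\delta))/(2m)}$.

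For the ``moreover'' clause, I would construct $d$ explicitly by reserving index $1 \in [k]$ as a ``zero slot'' with the convention $p_1 = 0$; any $\vecw$ realizable as $(p_{g(i)})_{i \in [n]}$ can (after possibly incrementing $k$ by one to introduce a zero entry, or reordering $p$ if a zero is already present) be re-represented so that $g(i) = 1 \iff w_i = 0$, giving $\norm{\vecw}_0 = |\{i : g(i) \neq 1\}| =: s$. The encoding of $g$ then transmits only the active coordinates: write $k$ in $\lceil \log n \rceil$ bits, write $s$ in $\lceil \log n \rceil$ bits, and list the $s$ pairs $(i, g(i)) \in [n] \times [k]$ in sorted order of $i$, each pair using $\lceil \log n \rceil + \lceil \log k \rceil$ bits. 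After the two fixed-length headers the remaining length is determined, so $d$ is prefix-free, and the total is at most $2\log n + \norm{\vecw}_0(\log n + \log k) = \norm{\vecw}_0 \log(kn) + 2 \log n$.

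The main obstacle will be carefully verifying prefix-freeness of the composite code: the decoder must deterministically locate block boundaries without lookahead, which is why $k$ is transmitted first as a fixed-length header (depending only on the pre-declared upper bound $n$, not on the hypothesis) and why the payload $p$ comes last with length dictated by the already-decoded $k$. A second subtlety is that the zero-reservation convention must be baked into the definition of $d$ before any data is seen; it does not restrict the realizable family of functions $f_\vecw$, because every achievable $\vecw$ admits a representation with $p_1 = 0$ (at worst at the cost of using $k+1$ in place of $k$, which adds at most one bit to the header and does not affect the asymptotic form of the bound).
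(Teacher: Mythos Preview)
Your proposal is correct and follows essentially the same approach as the paper: both construct a composite prefix-free code consisting of a fixed-length $\log n$ header for $k$, the prefix-free code $d(g)$, and a $kb$-bit block for $p$, and both prove the ``moreover'' clause by encoding $k$ and $\|\vecw\|_0$ as two $\log n$-bit headers followed by $\|\vecw\|_0$ pairs of $(\text{index},\text{parameter-index})$ at $\log n + \log k$ bits each. The only cosmetic differences are the ordering of the blocks (the paper places $p$ before $d(g)$, you place $d(g)$ before $p$) and that you make the zero-slot convention explicit where the paper leaves it implicit; neither affects the argument.
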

The proof is given in the appendix. The above theorem shows that the capacity of the learned model is bounded by two terms: number bits to present the parameters of the learned architecture ($kb$) and the description length of the weight sharing ($d(g$). This is rewarding for finding architectures with small number of non-zero weights because in that case the generalization gap would mostly depend on the number of non-zero parameters (using $d(g)\leq \norm{\vecw}_0 \log(kn) + 2\log(n)$). However, in the case of weight sharing, the picture is very different. The bound suggests that the generalization gap depends on the description length of the weight sharing and that could in worst case depend on the number of non-zero parameters if there is no structure. Otherwise, when weight-sharing is structured, it can be encoded more efficiently and the generalization could potentially only depend on the number of parameters of the final model found by the architecture search. This suggests that simply encouraging parameter values to be close to each other does not improve generalization unless there is a structure in place. Therefore, we leave this for future work and focus on learning networks with small number of non-zero weights in the rest of the paper which seemed to be the most important element based on our empirical investigation in Section~\ref{subsec:investigation}.
\begin{algorithm}
\caption{\blasso}
\label{alg:blasso}
\begin{algorithmic}[1]
\Require $f(\theta)$: stochastic objective function with parameters $\theta$, $\theta_0$: the initial parameter vector, $\lambda$: coefficient of $\ell_1$ regularizer, $\beta$: threshold coefficient, $\eta$: learning rate, $\tau$: number of updates
    \For {$t=1$ \bf{to} $\tau$}
        \State $\theta_t \gets \theta_{t-1} - \eta (\nabla_\theta f(\theta_{t-1}) + \lambda\sign(\theta_{t-1}))$
        \State $\theta_t \gets \theta_t \left(\abs{\theta_t} \geq \beta \lambda\right)_+$
        
    \EndFor
    \State Return $\theta_t$
\end{algorithmic}
\end{algorithm}
\vspace{-0.2in}
\section{\blasso: Learning Local Connectivity from Scratch}\label{sec:blasso}
In the previous section, we discussed that if a learning algorithm can find an architecture with a small number of non-zero weights, then the generalization gap of that architecture would mostly depend on the number of non-zero weights and the dependence on the number of original parameters would be only logarithmic. This is very encouraging. On the other hand, we know from our empirical investigation in Section~\ref{subsec:investigation} that locally connected networks perform considerably better than fully-connected ones. Is it possible to find locally connected networks by simply encouraging sparse connections when training on images? Could such networks bridge the gap between performance of fully-connected and convolutional networks?

We are interested in architectures with small number of parameters. In order to achieve this, we try the simplest form of encouraging the sparsity by adding an $\ell_1$ regularizer. In particular, we propose \blasso, a simple algorithm that is very similar to \textsc{lasso}~\citep{tibshirani1996regression} except it has an extra parameter that allows for more aggressive soft thresholding. The algorithm is shown in Algorithm~\ref{alg:blasso}.
\subsection{Training fully-connected Networks}
Table~\ref{table:comparison} compares the performance of $\sfc$ trained with \blasso to state-of-the-art methods in training fully-connected networks. The results show a significant improvement over previous work even considering complex methods such distillation or pretraining. Moreover, there is only very small gap between performance of \sfc trained with \blasso and its locally connected and convolutional counterparts. However, note that unlike \sconv and \slocaln, the performance of \sfc is invariant to permuting the pixels which is a considerable advantage when little is known about the structure of data. To put these results to perspective, these accuracies are on par with best results for convolutional networks in 2013~\citep{hinton2012improving,zeiler2013stochastic}. We fix $\beta$ in the experiments but tune the regularization parameter $\lambda$ for each datasets. We also observed that having higher $\lambda$ for the layer that corresponds to the convolution, improves the performance which is aligned with our understanding that such a layer could benefit the most from sparsity.

\begin{table}[h]
\small
    \centering
    \begin{tabular}{llccc}
        \toprule
         Model& Training Method &CIFAR-10 & CIFAR-100  & SVHN\\
        \midrule
\sconv & SGD & 87.05& 62.51&93.38\\
\slocal & SGD&85.86& 62.03&93.98\\
        \midrule
\textsc{mlp} \citep{neyshabur2018towards}& SGD (no Augmentation) &58.1& - &84.3\\
\textsc{mlp} \citep{mukkamala2017variants}& Adam/RMSProp &72.2& 39.3&-\\
\textsc{mlp} \citep{mocanu2018scalable}& SET(Sparse Evolutionary Training)&74.84& -&-\\
\textsc{mlp} \citep{urban2016deep}& deep convolutional teacher&74.3& -&-\\
\textsc{mlp} \citep{lin2015far}& unsupervised pretraining with ZAE&78.62& -&-\\
\midrule
\textsc{mlp} (3-\textsc{fc}) & SGD &75.12& 50.75&86.02\\
\textsc{mlp} (\sfcn) & SGD &78.63& 51.43&91.80\\
\textsc{mlp} (\sfcn)& \blasso($\beta=0$) & 82.45 & 55.58 & 93.80\\
\textsc{mlp} (\sfcn)& \blasso($\beta=1$) & 82.52 & 55.96 & 93.66\\
\textsc{mlp} (\sfcn)& \blasso($\beta=50$) & \textbf{85.19} & \textbf{59.56} & \textbf{94.07}\\
        \bottomrule\\
    \end{tabular}
 \caption{\small Comparing the performance \sfc trained with \blasso to other methods for training fully-connected networks. Please see the appendix for details of the training procedure.}
  \label{table:comparison}
\end{table}

\begin{figure}%
	\centering
	\subfloat{{\includegraphics[width=0.33\linewidth]{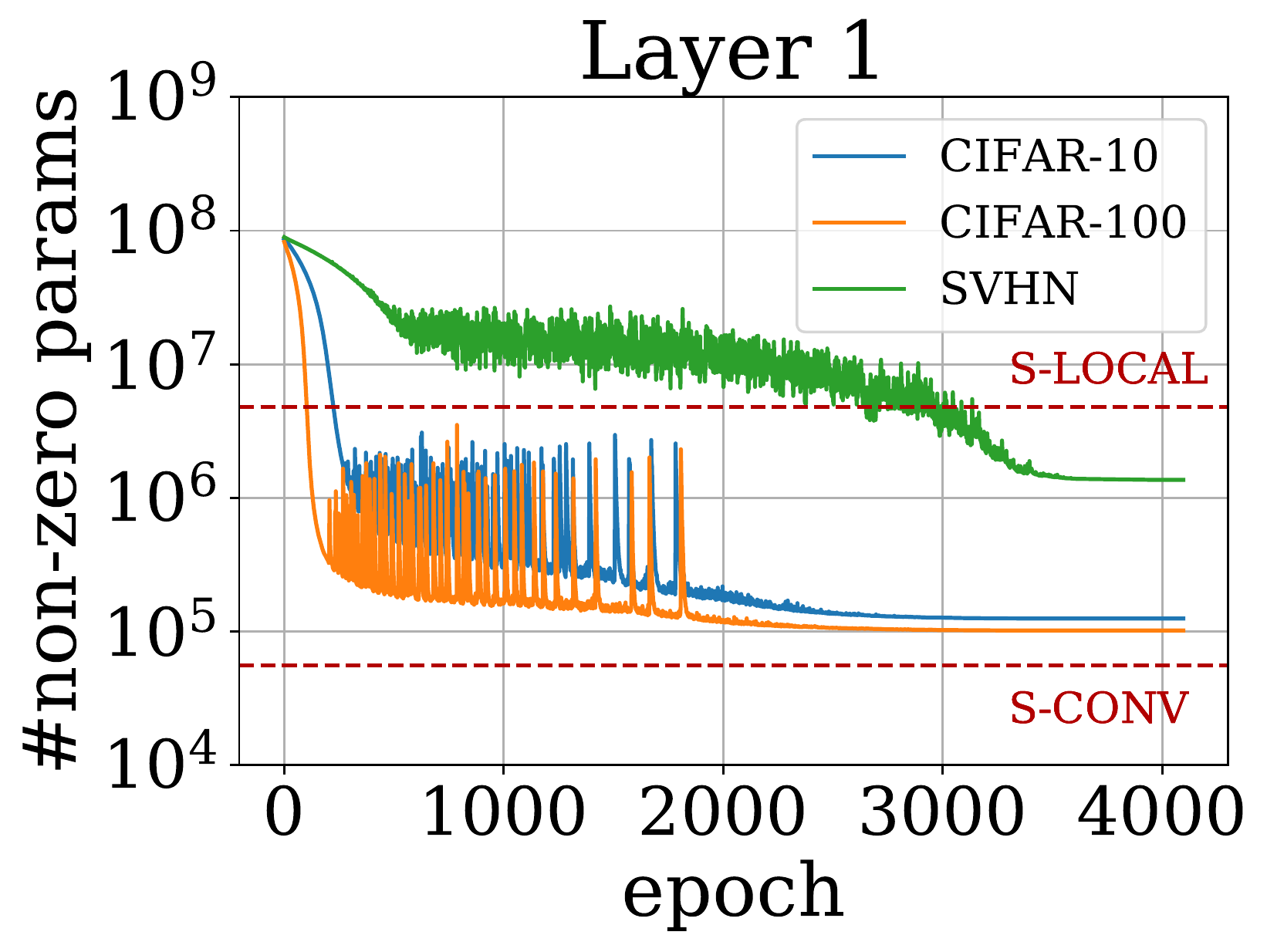}}}
	\subfloat{{\includegraphics[width=0.33\linewidth]{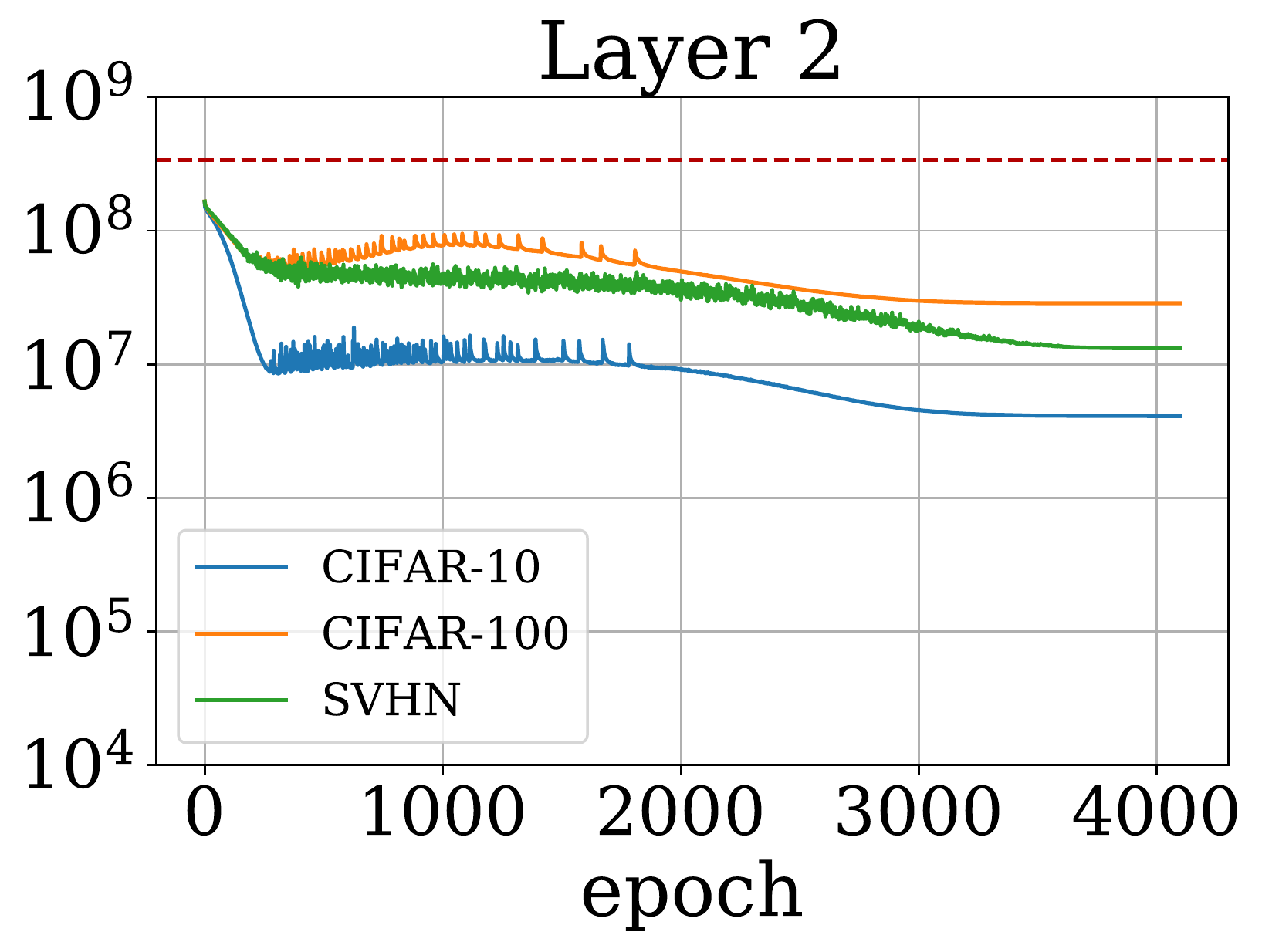}}}
	\subfloat{{\includegraphics[width=0.33\linewidth]{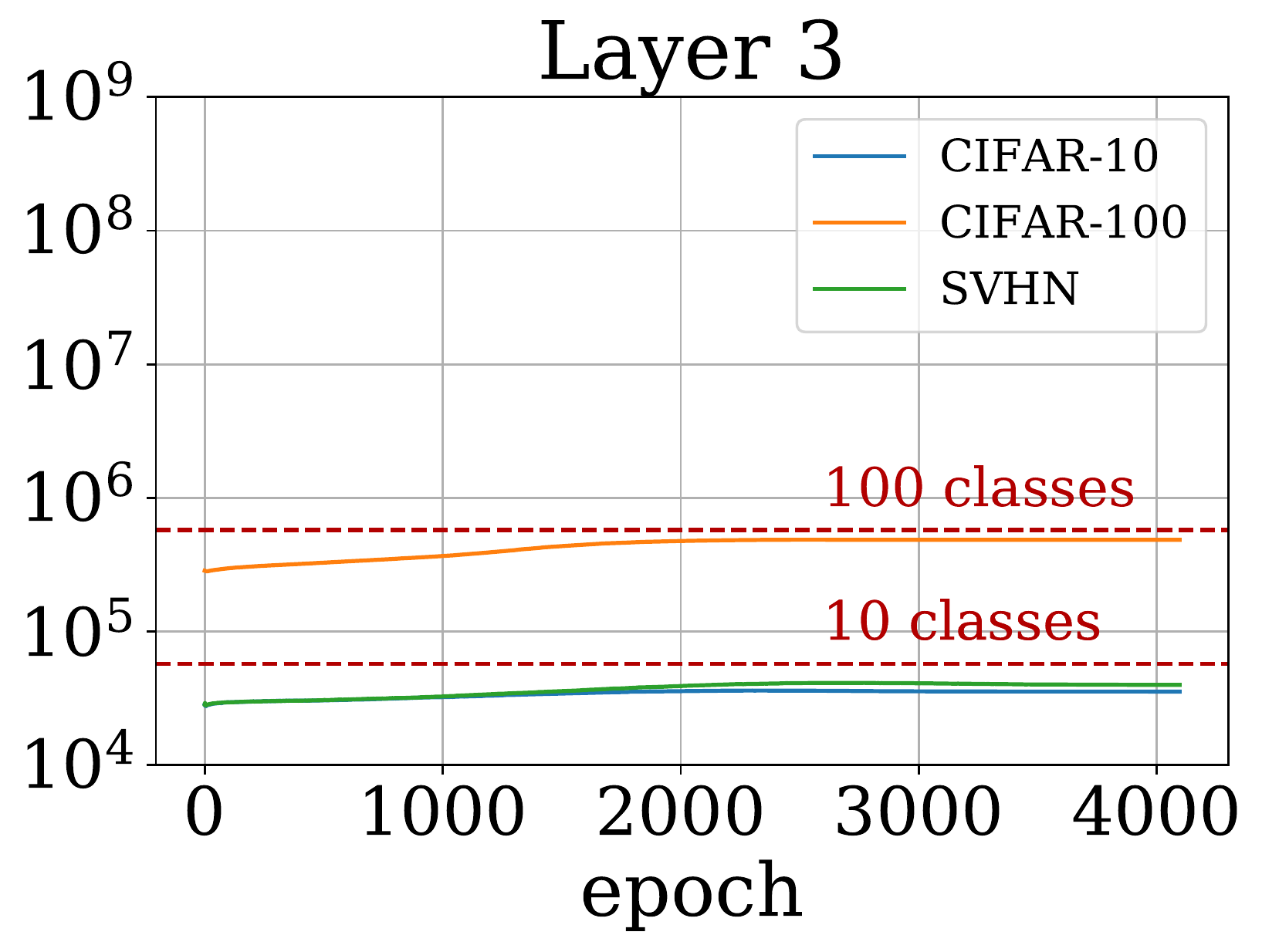}}} \\
	\caption{\small \textbf{Number of non-zero parameters in different layers of \sfc trained with \blasso}. The left panel indicates that even though the first layers of \sconv and \slocal have much fewer number of parameters  (red dashed lines) compare to \sfc, the number of non-zero parameters in trained \sfc is between that of \slocal and \sconv. The middle and right panels show the same quantity for the second and third layers of \sfc which have the same parametrization as \sconv and \slocal. The plots suggest that \blasso encourages the weights of the second layer to be much sparser but the last layer remains dense.}
	\label{fig:norms}
\end{figure}

Furthermore, to see if \blasso succeeds in learning architectures that are as sparse as \slocaln, we measure the number of non-zero weights in each layer separately. The results are shown in Figure~\ref{fig:norms}. The left panel corresponds to the first layer which is convolutional in \sconv and locally connected in \slocal but fully-connected in \sfc. As you can see, the solution found by \blasso has less nonzero parameters than the number of parameters in \slocal and has only slightly more parameters than \sconv even though \sconv is benefiting from weight-sharing. The middle and left plots show that in other layers, the solution found by \blasso is still sparse but less so in the final layer.

Our further investigation into the learned filters resulted in surprising results that is shown in Figure~\ref{fig:filters}. \sfc trained with SGD learned filters that are very dense but with locally correlated values. However, the same network trained with \blasso learns a completely different set of filters. The filters learned by \blasso are sparse and locally connected in a similar fashion to \slocal filters. Moreover, it appears that the networks trained with \blasso have learned that immediate pixels have little new information. In order to benefit from larger receptive fields while remaining local and sparse, these networks have learned filters that seem like a sparse sampling of a local neighborhood in the image. This encouraging finding validates that by using a learning algorithm with better inductive bias, one can start from an architecture without much bias and learn it from data.

\begin{figure}
    \centering
    \includegraphics[width=1\linewidth]{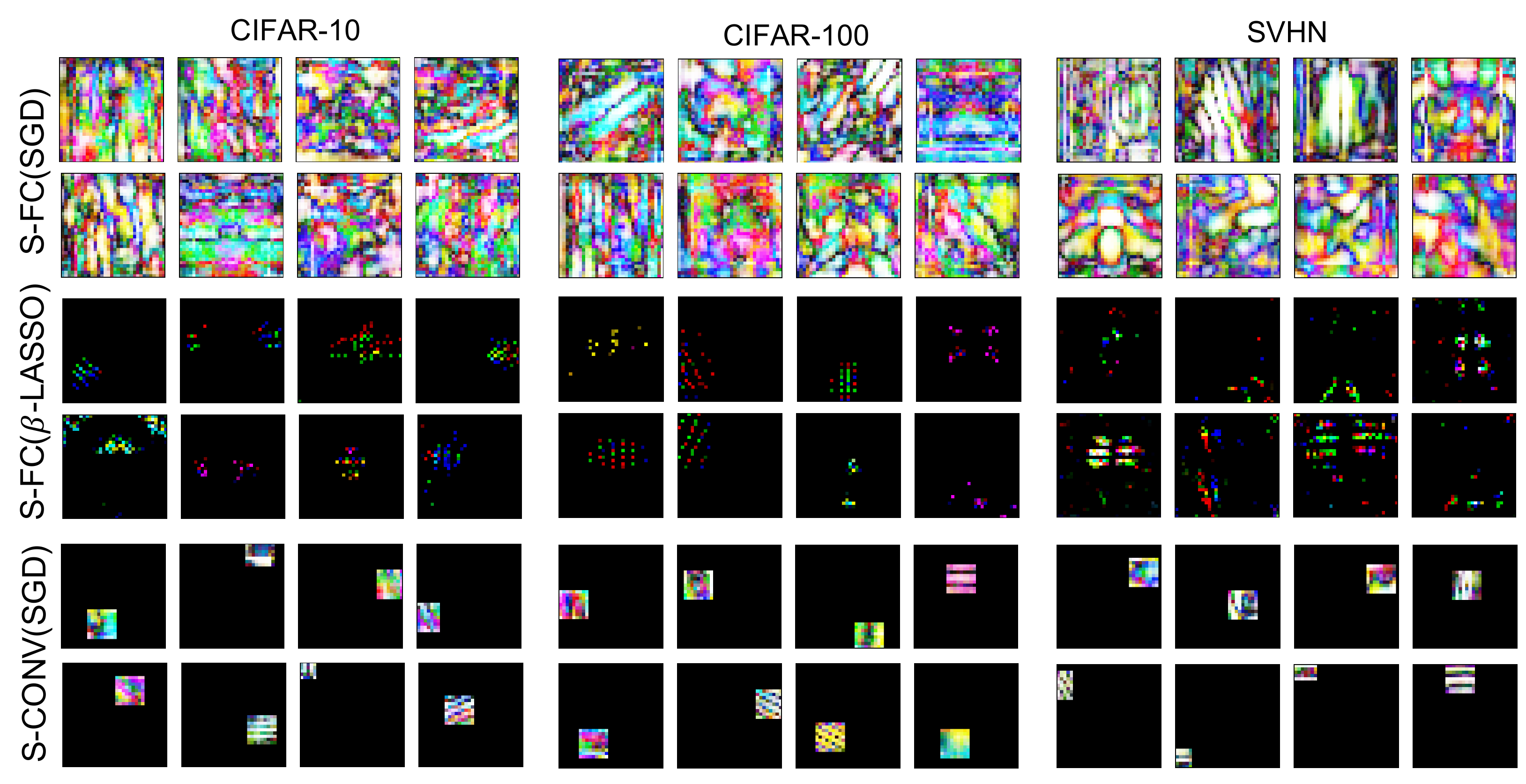}
    \caption{\small Comparing first layer filters of \sfc trained with \blasso to that of \sfc and \slocal trained with SGD. The filters learned by SGD training of \sfc are dense but locally correlated. However, the filters learned when trained with \blasso are locally connected in a similar way to \slocal. Furthermore, it seems that the network has learned that nearby pixels have similar information and therefore to get more information while remaining local, it learns to look at a sparse sampling of a local neighborhood.}
	\label{fig:filters}
\end{figure}

\subsection{Training Convolutional Networks with Larger Kernel Size}
Slightly deviating from our main goal, we tried training ResNet18 with different kernel sizes using \blasso and compare it with SGD. Figure~\ref{fig:resnet} shows that \blasso improves over SGD for all kernel sizes across all datasets. The improvement is predictably more so when kernel size is large since \blasso would learn to adjust it automatically. These results again confirm that \blasso can be used in many different settings to adoptively learn the structure.

\begin{figure}%
	\centering
	\subfloat{{\includegraphics[width=0.33\linewidth]{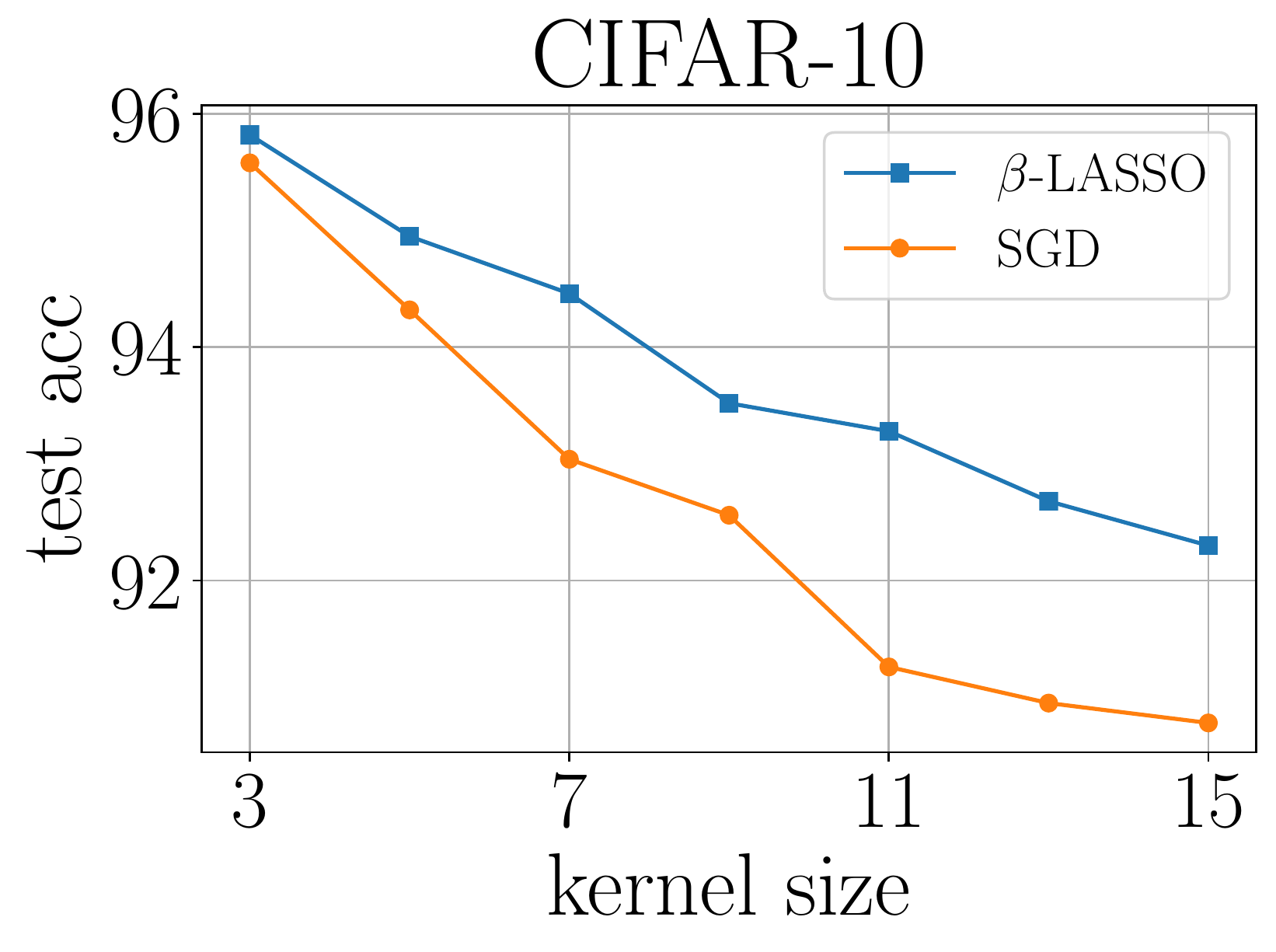}}}
	\subfloat{{\includegraphics[width=0.33\linewidth]{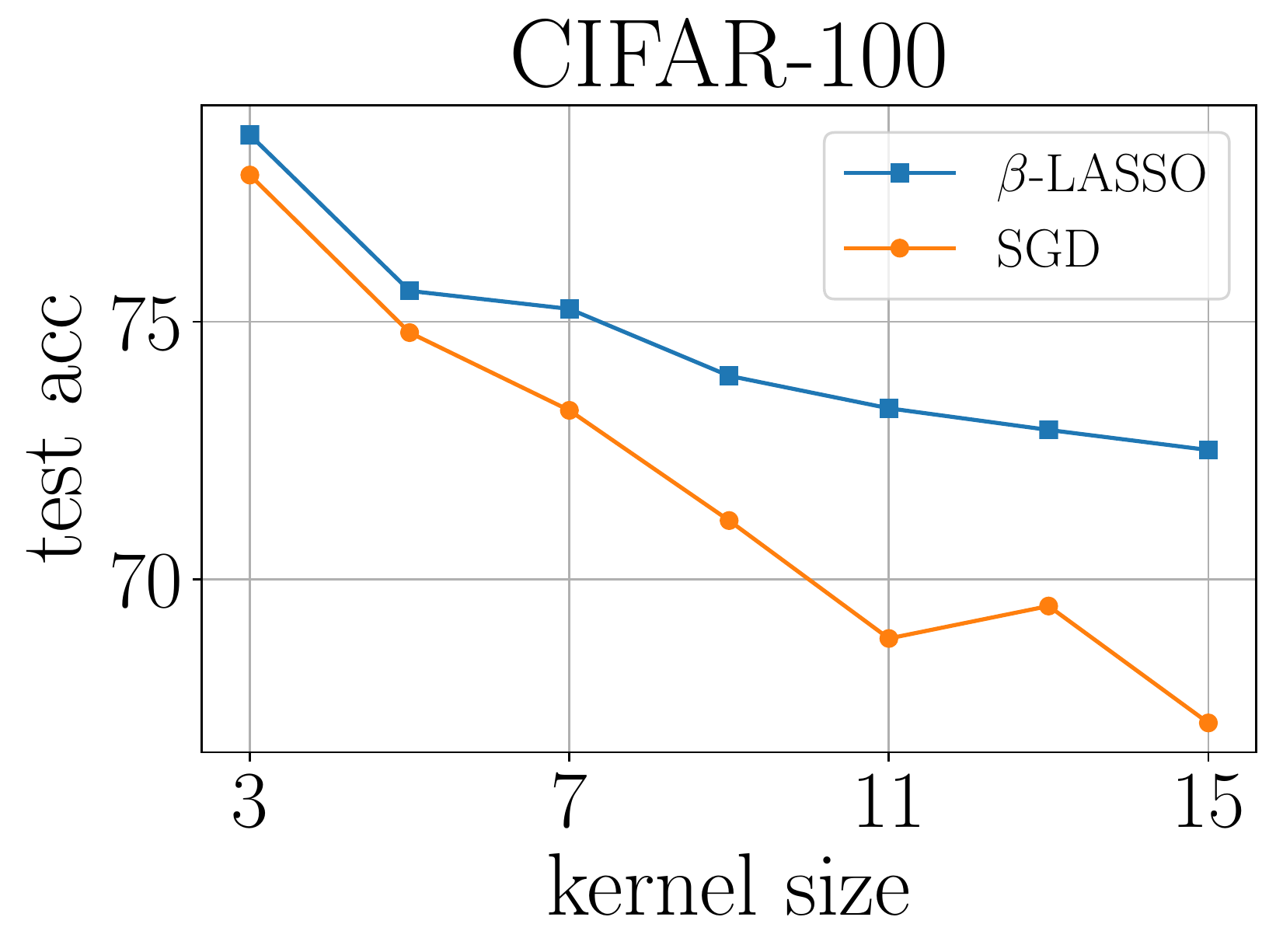}}}
	\subfloat{{\includegraphics[width=0.33\linewidth]{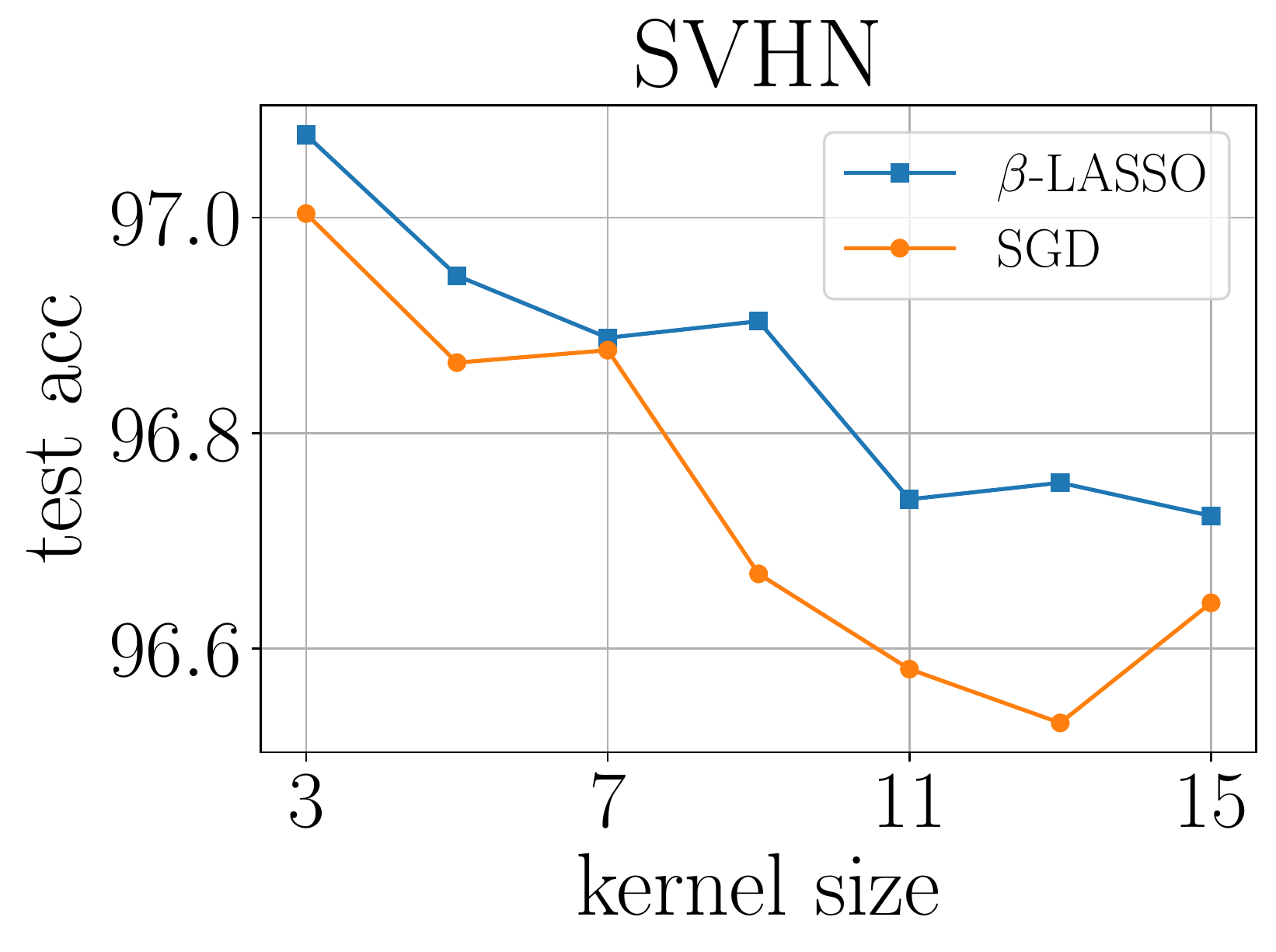}}} \\
	\caption{\small Performance of ResNet18 trained with different kernel sizes.}
	\label{fig:resnet}
\end{figure}

\vspace{-0.1in}
\section{Discussion and Future Work}
In this work, we studied the inductive bias of convolutional networks through empirical investigations and MDL theory. We proposed a simple algorithm called \blasso that significantly improves training of fully-connected networks. \blasso does not have any specific inductive bias related to images. For example, permuting all pixels in our experiments does not change the performance of $\beta$-\textsc{lasso}. It is therefore interesting to see if \blasso can be used in other contexts such as natural language processing or in domains such as computational biology where the data is structured but our knowledge of the structure of the data is much more limited compare to computer vision. Another promising direction is to improve efficiency of \blasso to benefit from sparsity for faster computation and less memory usage. The current implementation does not benefit from sparsity in terms of memory or computation. In order to scale up \blasso to be able to handle larger networks and input data dimensions, these barriers should be removed. Finally, we want to emphasize general purpose algorithms such as \blasso that are able to learn the structure become even more promising as training larger models become more accessible.

\section*{Acknowledgement}
We thank Sanjeev Arora, Ethan Dyer, Guy Gur-Ari, Aitor Lewkowycz and Yuhuai Wu for many fruitful discussions and Vaishnavh Nagarajan and Vinay Ramasesh for their useful comments on this draft.
\bibliographystyle{apalike}
\bibliography{ref}

\newpage

\appendix

\section{Experimental Setup}
We used Caliban \citep{caliban2020github} to manage all experiments in a reproducible environment in Google Cloud's AI Platform. In all experiments, we used 16-bit operations on V100 GPUs except Batch Normalization which was kept at 32-bit.
\subsection{Architectures}
The ResNet18 architecture used in our experiments is the same its original version introduced in \citet{he2016deep} for training ImageNet dataset except that the first convolution layer has kernel size 3 by 3 and it is not followed by a max-pooling layer. This changes are made to adjust the architecture for smaller image sizes used in our experiments. 3-\textsc{fc} architecture is a fully connected network with two hidden layers having the same number of hidden units with ReLU activation and Batch Normalization. Finally, please see Tables \ref{table:dconv} and \ref{table:sconv} for details of \dconvn, \sconv and their locally and fully connected counterparts.

\begin{table}[h]
\small
\centering
\begin{tabular}{lllll} %
\toprule
\multirow{2}{*}[-1pt]{Layer} & \multirow{2}{*}[-1pt]{Description} & \multicolumn{3}{c}{\#param}\\
\cmidrule(lr){3-5}
&&\dconv & \dlocal& \dfc\\
\midrule
conv1 & \textsc{conv}3x3$(\alpha,1)$ & $9 \times 3\alpha$& $9\times 3s^2\alpha$ &$3s^4\alpha$\\
conv2 & \textsc{conv}3x3$(2\alpha,2)$& $9\times2^1\alpha^2$& $9 \times s^2\alpha^2/2$ & $s^4\alpha^2/2$\\
conv3 & \textsc{conv}3x3$(2\alpha,1)$& $9\times2^2\alpha^2$& $9 \times s^2\alpha^2/4$ & $s^4\alpha^2/4$\\
conv4 & \textsc{conv}3x3$(4\alpha,2)$& $9\times2^3\alpha^2$& $9 \times s^2\alpha^2/8$ & $s^4\alpha^2/8$\\
conv5 & \textsc{conv}3x3$(4\alpha,1)$& $9\times2^4\alpha^2$& $9 \times s^2\alpha^2/16$ & $s^4\alpha^2/16$\\
conv6 & \textsc{conv}3x3$(8\alpha,2)$& $9\times2^5\alpha^2$& $9 \times s^2\alpha^2/32$ & $s^4\alpha^2/32$\\
conv7 & \textsc{conv}3x3$(8\alpha,1)$& $9\times2^6\alpha^2$& $9 \times s^2\alpha^2/64$ & $s^4\alpha^2/64$\\
conv8 & \textsc{conv}3x3$(16\alpha,2)$& $9\times2^7\alpha^2$& $9 \times s^2\alpha^2/128$ & $s^4\alpha^2/128$\\
\midrule
fc1 & \textsc{fc}$(64\alpha)$& $4s^2\alpha^2$&$4s^2\alpha^2$& $4s^2\alpha^2$\\
fc2 & \textsc{fc}(c)&$64c\alpha$&$64c\alpha$& $64c\alpha$\\
\midrule
\multirow{ 2}{*}{Total} & \multirow{ 2}{*}{\dconvn($\alpha$)} &$\approx(9\times 2^8+4s^2)\alpha^2$&$\approx 13s^2\alpha^2$&$\approx (s^4+4s^2)\alpha^2$\\
&&$+(27+64c)\alpha$&$+(27s^2+64c)\alpha$& $+(3s^4 + 64c)\alpha$\\
\bottomrule\\
\end{tabular}
\caption{ \dconv and its counterparts. $\alpha$ denotes the number of base channels which determines the total number of parameters of the architecture. The convolutional modules in \dconv turn into locally connected and fully connected modules for \dlocal and \dfc respectively.}
 \label{table:dconv}
\end{table}
\subsection{Hyperparameters}
All architectures in this paper are trained with Cosine Annealing learning rate schedule with initial learning rate $0.1$, batch size $512$ and data augmentation proposed in~\citet{lim2019fast}. We looked at learning rates 0.01 and 1 as well but observed that 0.1 works well across experiments. For models trained with SGD,
we tried values 0 and 0.9 for the momentum, and 0, $10^{-4}$, $2\times 10^{-4}$, $5\times 10^{-4}$, $10^{-3}$ for the weight decay. As for \blasson, we did not use momentum, set $\beta=50$ and tried $10^{-6}$, $2\times 10^{-6}$, $5\times 10^{-6}$, $10^{-5}$ and $2\times 10^{-5}$ for the choice of $\ell_1$ regularization for layers that correspond to convolutional and fully connected layers separately. Moreover, for all models, we tried both adding dropout to the last two fully connected layers and not adding dropout to any layers. Models in Table~\ref{table:comparison} are trained for 4000 epochs. All other models in this paper are trained for 400 epochs unless mentioned otherwise. For each experiment, picked models with highest validation accuracy among these choices of hyper-parameters.
\begin{table}[h]

\centering
\begin{tabular}{lllll} %
\toprule
\multirow{2}{*}[-1pt]{Layer} & \multirow{2}{*}[-1pt]{Description} & \multicolumn{3}{c}{\#param}\\
\cmidrule(lr){3-5}
&&\dconv & \dlocal& \dfc\\
\midrule
conv1 & \textsc{conv}9x9$(\alpha,2)$ & $81 \times 3\alpha$& $81\times 3s^2\alpha/4$& $3s^4\alpha/4$\\
\midrule
fc1 & \textsc{fc}$(24\alpha)$& $6s^2\alpha^2$& $6s^2\alpha^2$&$6s^2\alpha^2$\\
fc2 & \textsc{fc}(c)&$24c\alpha$& $24c\alpha$&$24c\alpha$\\
\midrule
\multirow{ 2}{*}{Total} & \multirow{ 2}{*}{\sconvn($\alpha$)} &$6s^2\alpha^2$& $6s^2\alpha^2$&$6s^2\alpha^2$\\
&&$+ (243+24c)\alpha$& $+(243s^2/4 + 24c)\alpha$&$+(3s^4/4+ 24c)\alpha$\\
\bottomrule\\
\end{tabular}
\caption{ \sconv and its counterparts. $\alpha$ denotes the number of base channels which determines the total number of parameters of the architecture. The convolutional modules in \sconv turn into locally connected and fully connected modules for \slocal and \sfc respectively.}
 \label{table:sconv}
\end{table}

\section{Proof of Theorem 2}
In this section, we restate and prove Theorem~\ref{thm:sparsity}.
\mdlsparse*
\begin{proof}
We build our proof based on theorem~\ref{thm:mdl} and construct a prefix-free description language for $f_\vecw$. We use the first $\log(n)$ bits to encode $k\leq n$ which is the dimension of parameters. The next $kb$ bits then would be used to store the values of parameters. Note that so far, the encoding has varied length based on $k$ but it is prefix free because for any two dimensions $k_1\neq k_2$, the first $\log(n)$ bits are different. Finally, we add the encoding for $g$ which takes $d(g)$ bit based on theorem statement. After adding $d$ the encoding remains prefix-free. The reason is that for any two $f_\vecw$, if they have different number of parameters, the first $\log(n)$ bits will be different. Otherwise, if they have the exact same number of parameters and the same parameter values, the length of the encoding before $d(g)$ will be exactly $\log(n)+kb$ and since $d$ is prefix free, the whole encoding remains prefix-free.

Next we construct a prefix-free description $d$ such that for any $g\in \calG$, $d(g)\leq \|\vecw\|_0\log(n)$. Instead of assigning all weights to parameters, it is enough to specify the weights that are assigned to non-zero parameters and the rest of the weights will then be assigned to the parameter with value zero. Similar to before, we use the first $\log(n)$ bits to specify the number of non-zero weights, i.e. $\norm{\vecw}_0$ and the next $\log(n)$ bits to specify the number of parameters $k$. This again helps to keep the encoding prefix-free. Finally for each non-zero weight, we use $\log(n)$ bits to identify the index and $\log(k)$ to specify the index of its corresponding parameter. Therefore, the total length will be $\norm{\vecw}_0\log(kn)+2\log(n)$ and this completes the proof.
\end{proof}

\section{Supplementary Figures}
\begin{figure}%
	\centering
	\subfloat{{\includegraphics[width=0.33\linewidth]{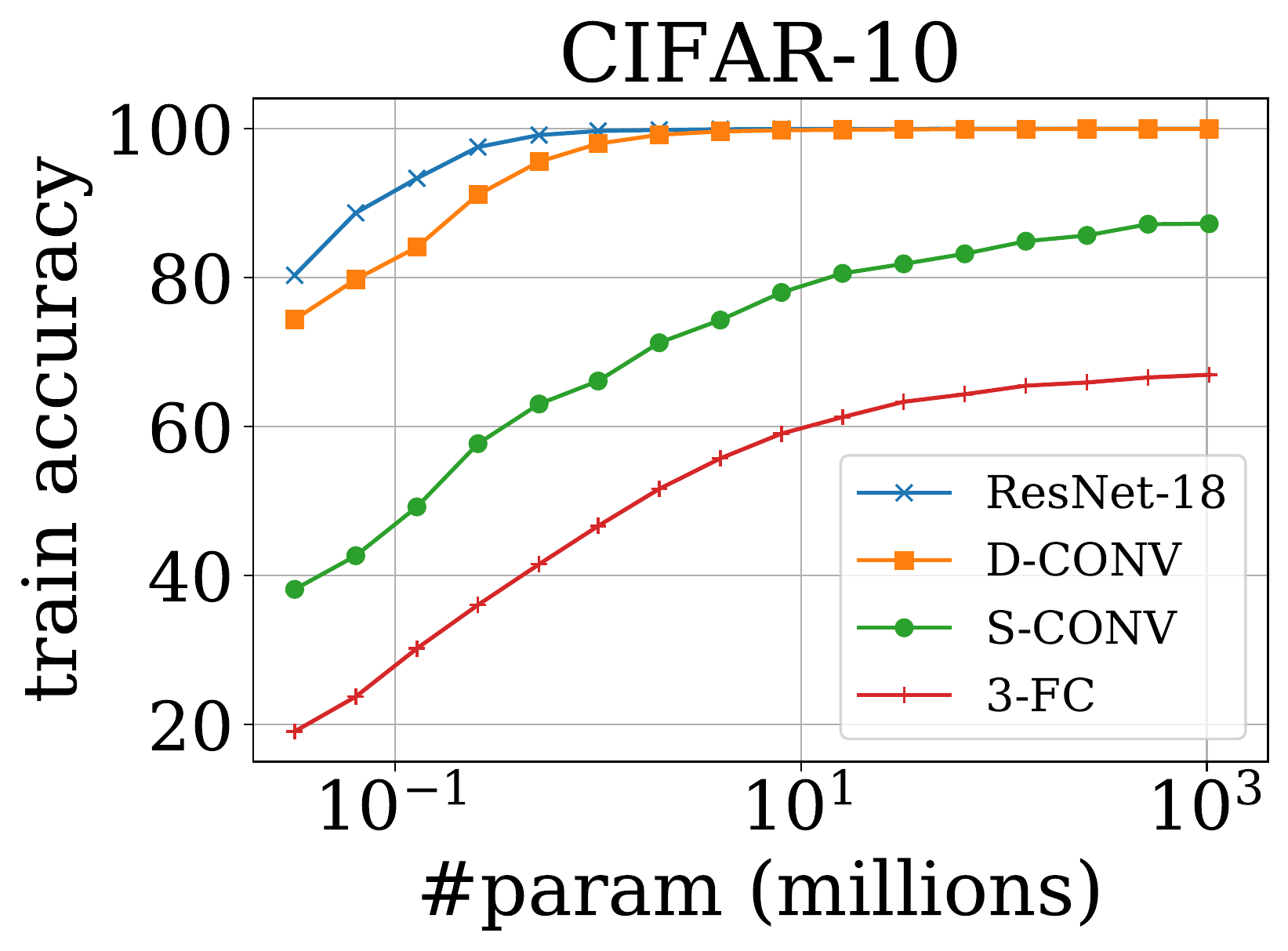}}}
	\subfloat{{\includegraphics[width=0.33\linewidth]{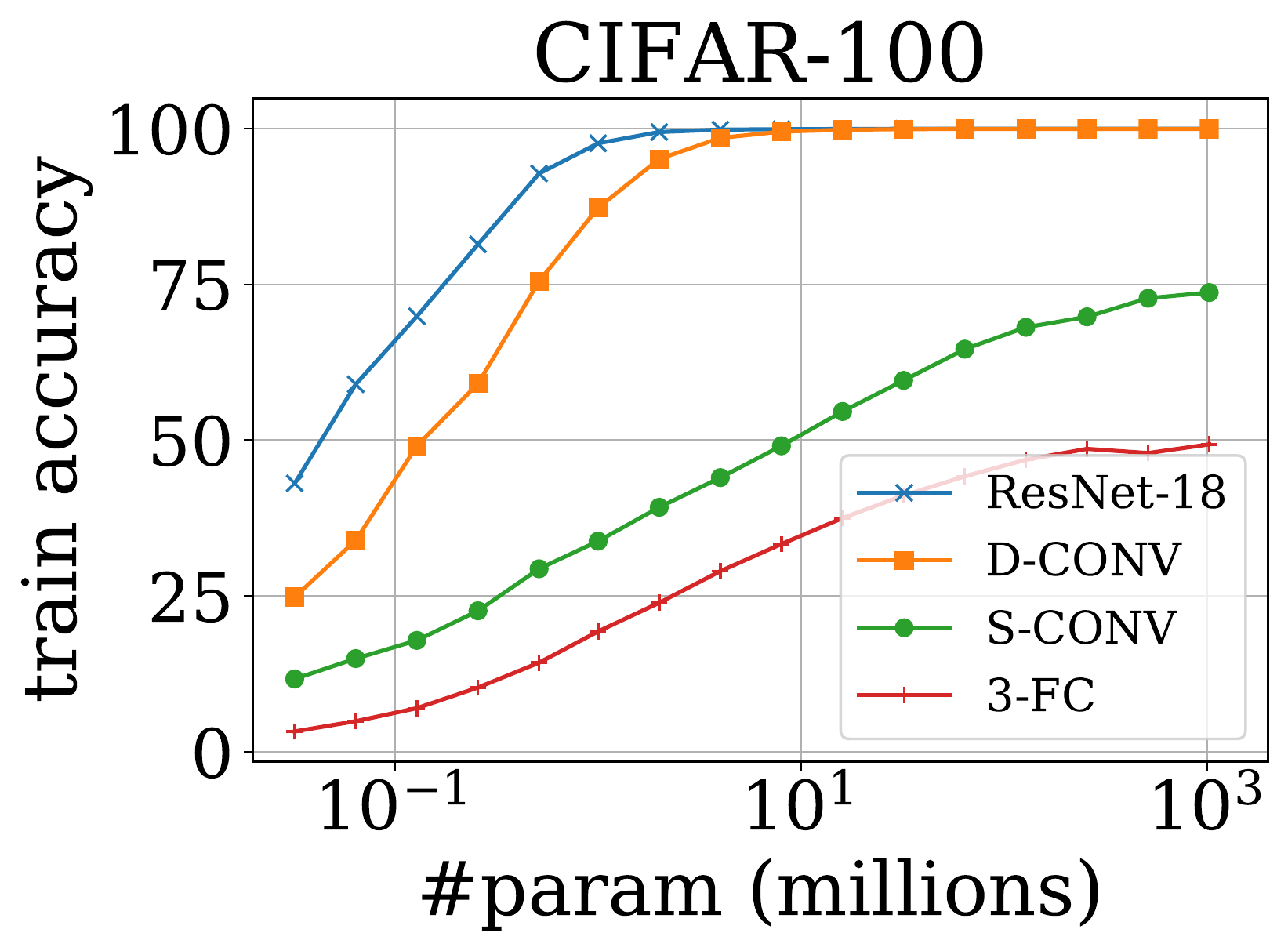}}}
	\subfloat{{\includegraphics[width=0.33\linewidth]{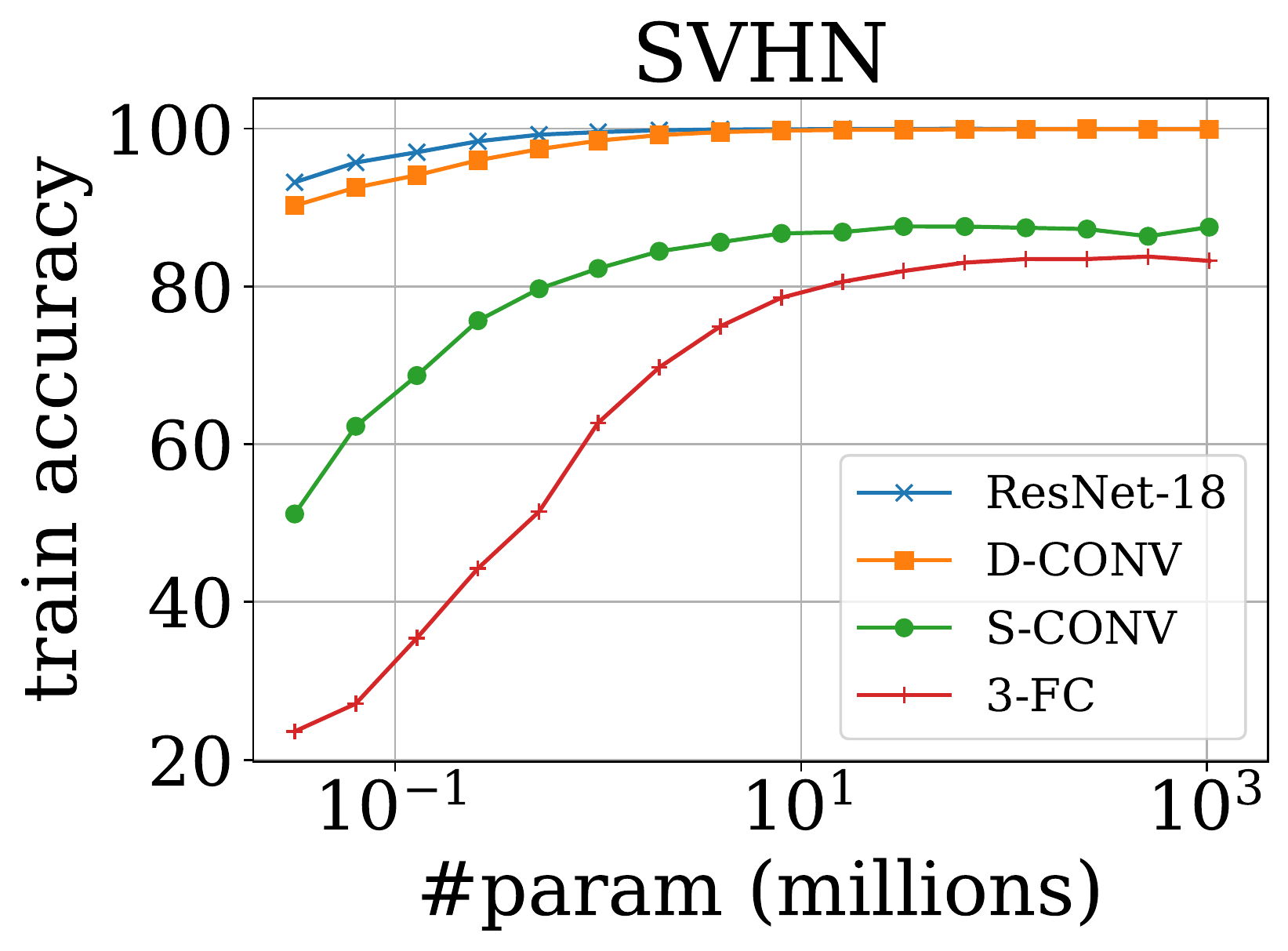}}} \\
	\centering
	\subfloat{{\includegraphics[width=0.33\linewidth]{baseline-CIFAR10-test.pdf}}}
	\subfloat{{\includegraphics[width=0.33\linewidth]{baseline-CIFAR100-test.pdf}}}
	\subfloat{{\includegraphics[width=0.33\linewidth]{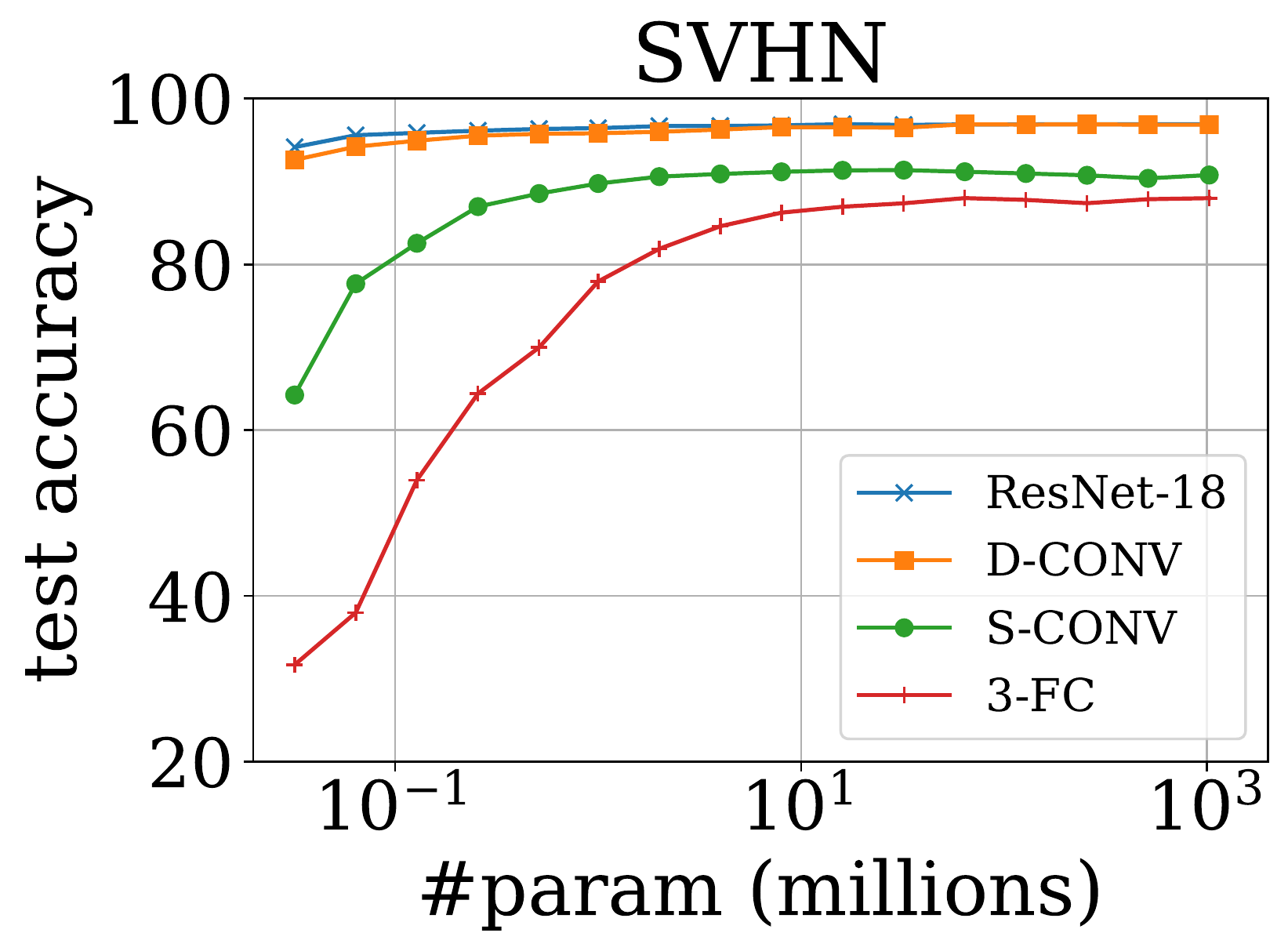}}} \\
	\caption{ \textbf{Performance scaling of different architectures.} The top row shows the training accuracy and the
bottom row shows test accuracy}
	\label{fig:baseline-all}
\end{figure}

\begin{figure}%
	\centering
	\subfloat[CIFAR-10]{{\includegraphics[width=0.32\linewidth]{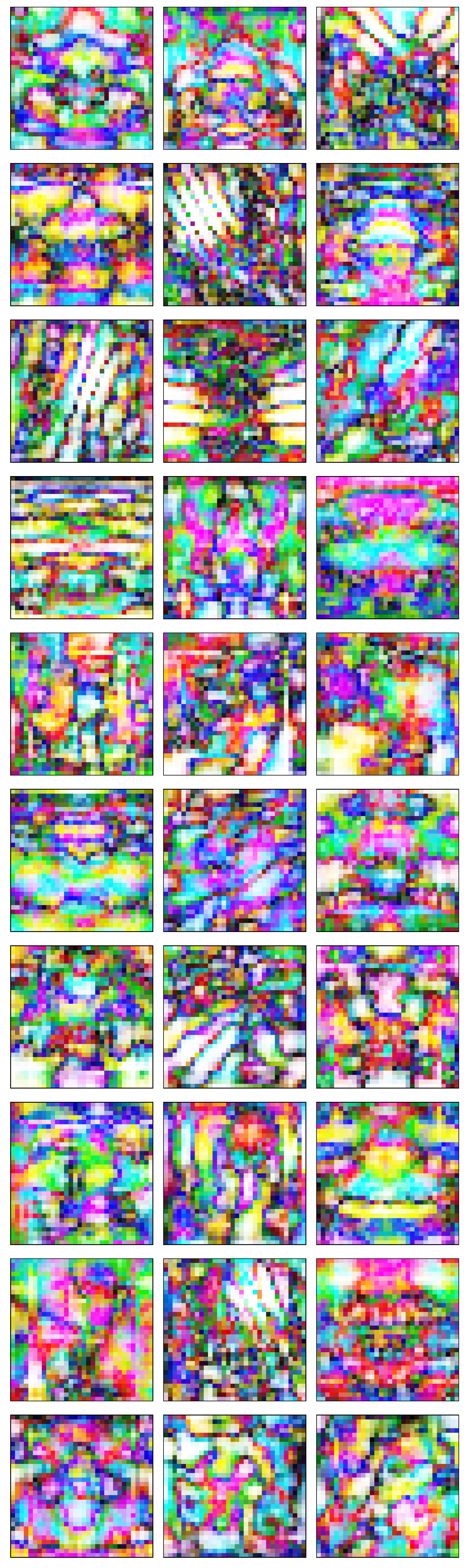}}}\hspace{0.07in}
	\subfloat[CIFAR-100]{{\includegraphics[width=0.32\linewidth]{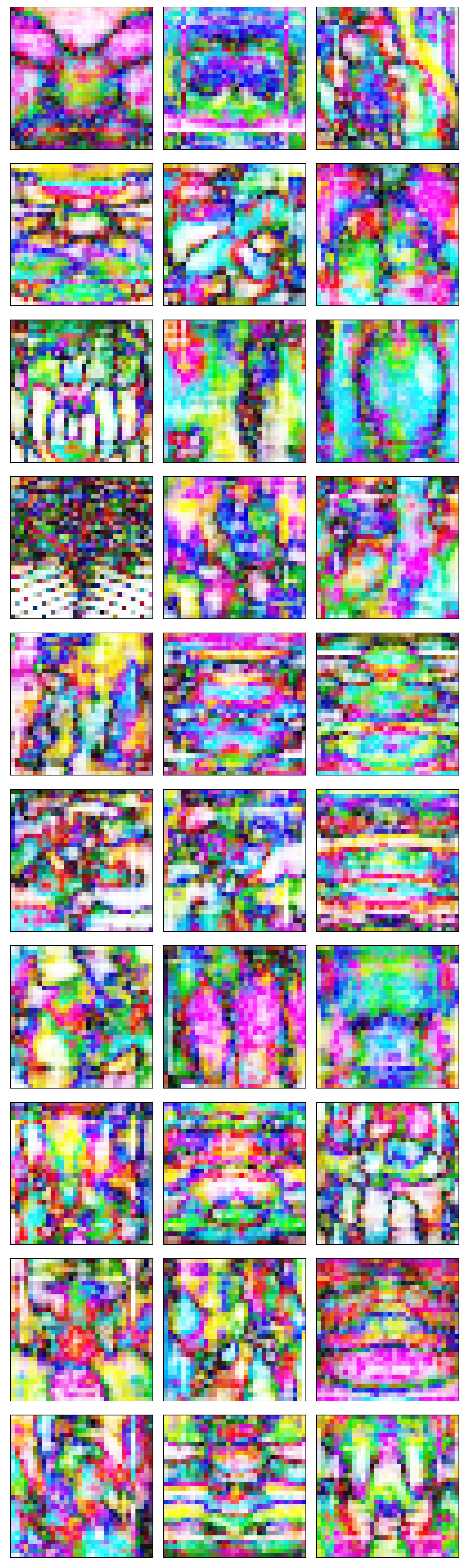}}}\hspace{0.07in}
	\subfloat[SVHN]{{\includegraphics[width=0.32\linewidth]{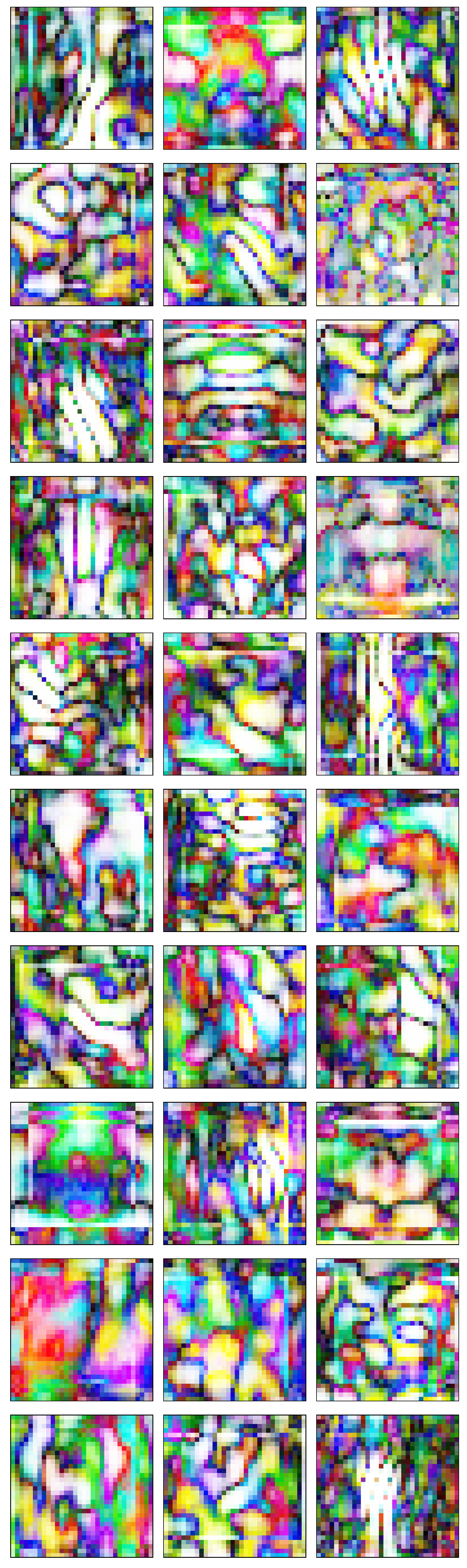}}} \\
	\caption{ First layer filters of \sfc trained with SGD. This filters are chosen randomly among all filters with at least 20 non-zero values.}
	\label{fig:allfilters-fc}
\end{figure}

\begin{figure}%
	\centering
	\subfloat[CIFAR-10]{{\includegraphics[width=0.32\linewidth]{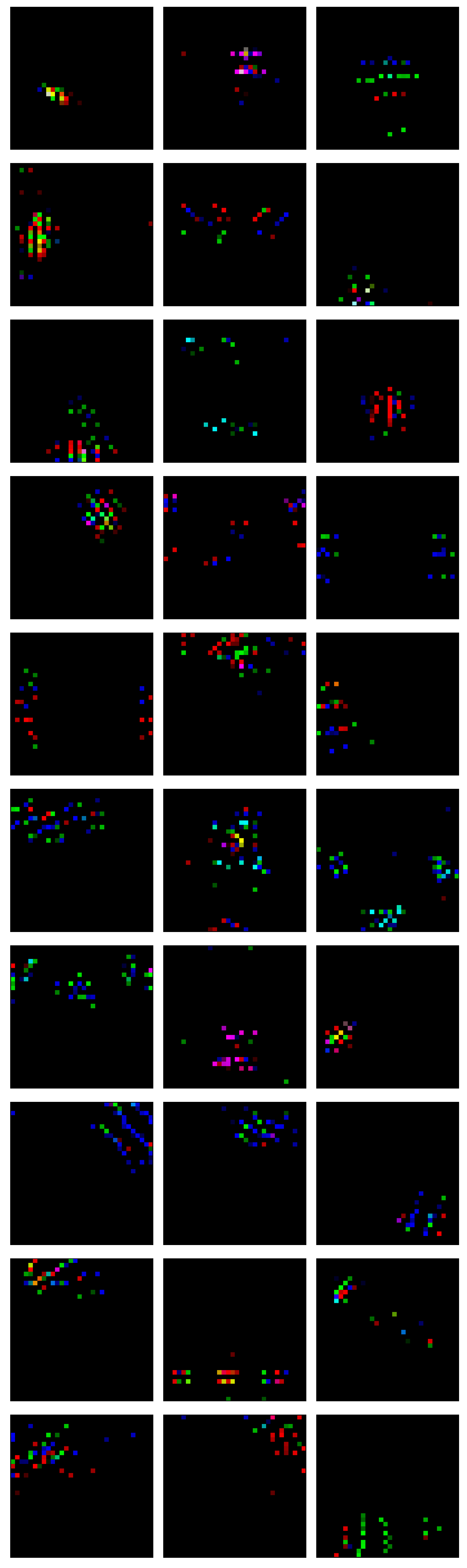}}}\hspace{0.07in}
	\subfloat[CIFAR-100]{{\includegraphics[width=0.32\linewidth]{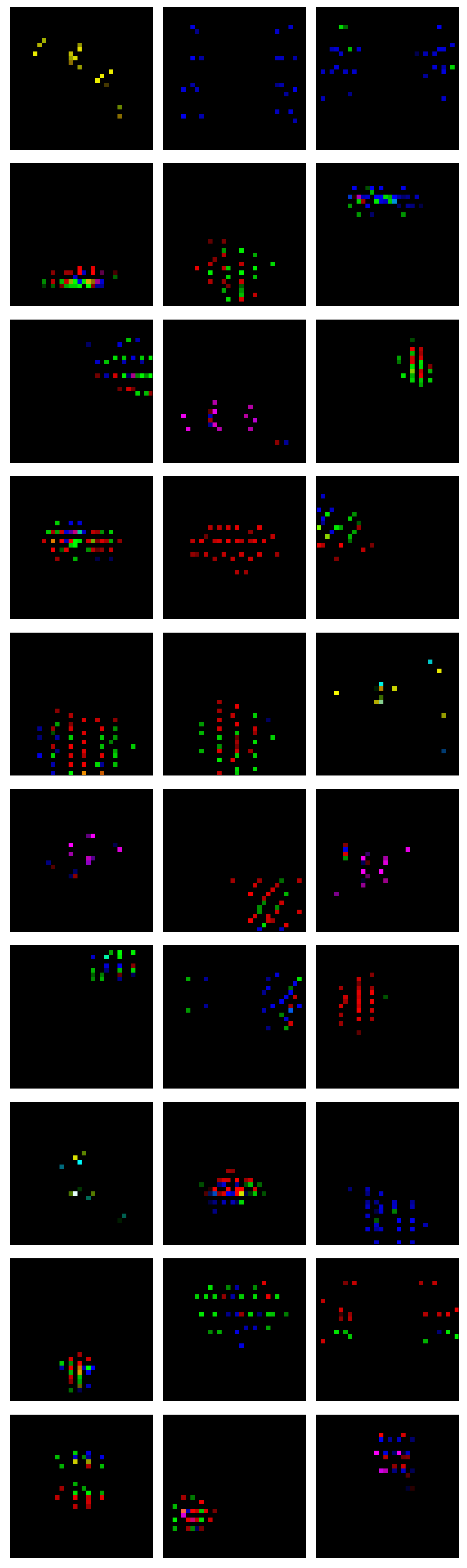}}}\hspace{0.07in}
	\subfloat[SVHN]{{\includegraphics[width=0.32\linewidth]{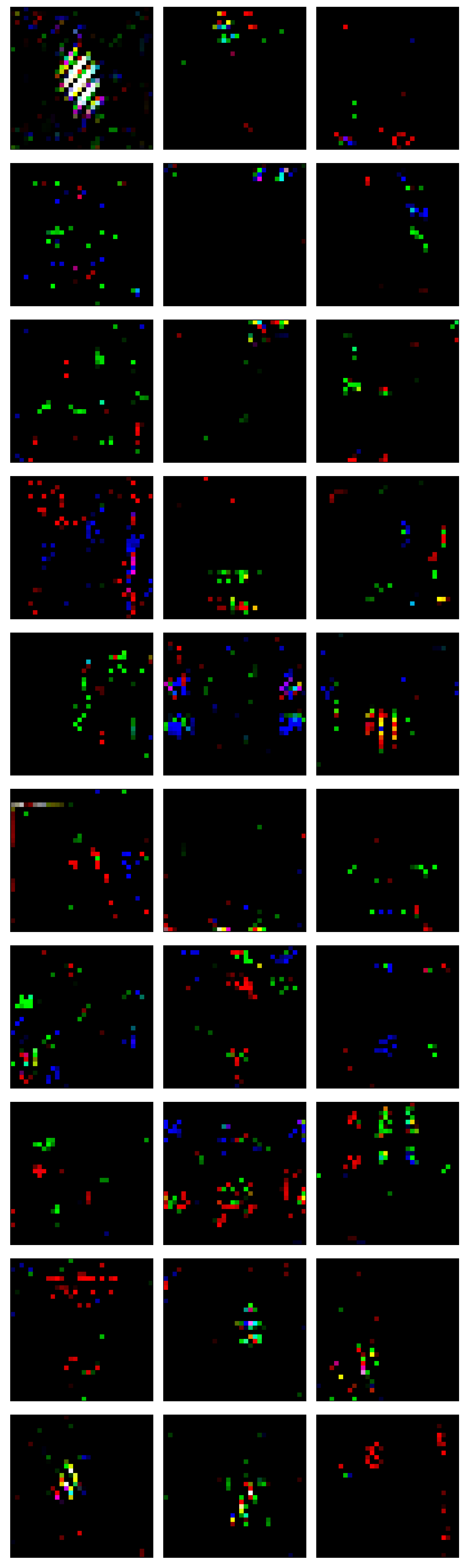}}} \\
	\caption{ First layer filters of \sfc trained with \blasson. This filters are chosen randomly among all filters with at least 20 non-zero values.}
	\label{fig:allfilters-ours}
\end{figure}

\begin{figure}%
	\centering
	\subfloat[CIFAR-10]{{\includegraphics[width=0.32\linewidth]{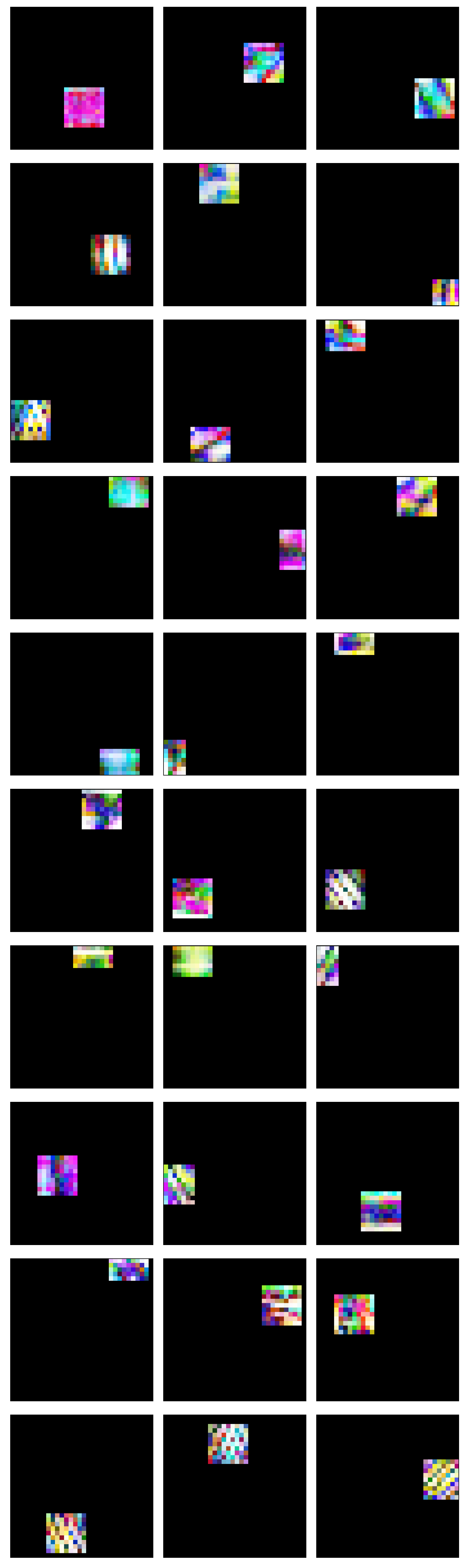}}}\hspace{0.07in}
	\subfloat[CIFAR-100]{{\includegraphics[width=0.32\linewidth]{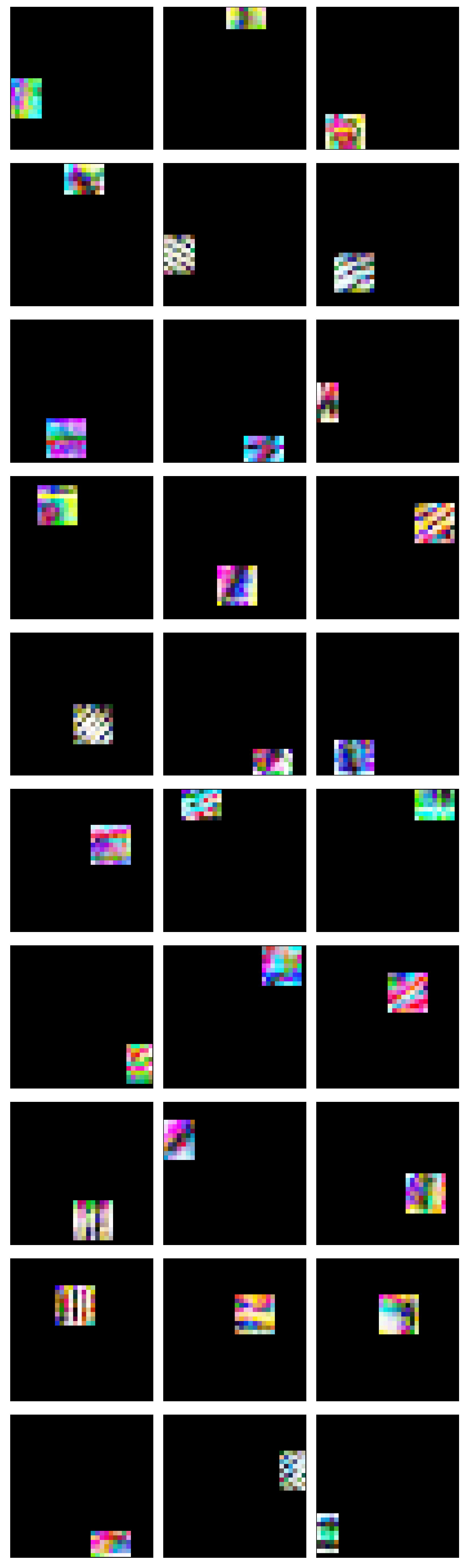}}}\hspace{0.07in}
	\subfloat[SVHN]{{\includegraphics[width=0.32\linewidth]{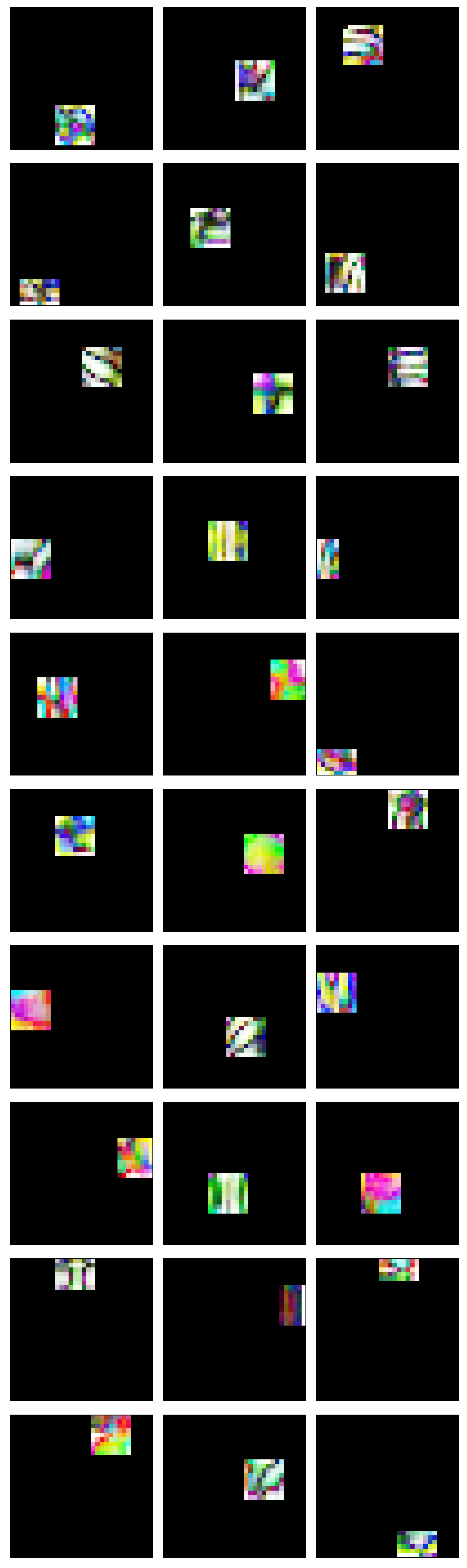}}} \\
	\caption{ First layer filters of \slocal trained with SGD. This filters are chosen randomly among all filters with at least 20 non-zero values.}
	\label{fig:allfilters-local}
\end{figure}

\end{document}